\newcommand{\SM}{\mathbf{SM}}
\newcommand{\DSM}{\mathbf{DSM}}
\newcommand\signfcn{{\mathrm{sign}}}
\newcommand\indicfcn{{\mathbbm{1}}}
\newcommand{\diam}{\mathscr{D}}
\newtheorem{theorem}{Theorem}[section]
\newtheorem{lemma}[theorem]{Lemma}
\newtheorem{assumption}[theorem]{Assumption}
\newtheorem{definition}[theorem]{Definition}
\title{Analyzing Neural Network-Based Generative Diffusion Models through Convex Optimization}
\author{%
   Fangzhao Zhang\\
  Electrical Engineering\\
  Stanford University\\
  \texttt{zfzhao@stanford.edu} \\
  \And
  Mert Pilanci \\
  Electrical Engineering\\
  Stanford University\\
\texttt{pilanci@stanford.edu} 
}
\begin{document}

\maketitle

\begin{abstract}
  Diffusion models are gaining widespread use in cutting-edge image, video, and audio generation. Score-based diffusion models stand out among these methods, necessitating the estimation of  score function of the input data distribution. In this study, we present a theoretical framework to analyze two-layer neural network-based diffusion models by reframing score matching and denoising score matching as convex optimization. We prove that training shallow neural networks for score prediction can be done by solving a single convex program. Although most analyses of diffusion models operate in the asymptotic setting or rely on approximations, we characterize the exact predicted score function and establish convergence results for neural network-based diffusion models with finite data. Our results provide a precise characterization of what neural network-based diffusion models learn in non-asymptotic settings.
\end{abstract}
\vspace{-1cm}
\section{Introduction}
\vspace{-0.3cm}
\vspace{-0.2cm}
Diffusion models \citep{sohldickstein2015deep} were proposed to tackle the problem of sampling from an unknown distribution and is later shown to be able to generate high quality images  \citep{ho2020denoising}. Song et al.  \cite{song2021scorebased} recognize diffusion model as an example of  score-based models which iteratively exploit Langevin dynamics to produce data from an unknown distribution. This approach only requires the estimation of the score function of the data distribution. Specifically, the simplest form of Langevin Monte Carlo procedure involves first sampling $x^0$ from an initial distribution, then repeating the following steps
\begin{align*}
x^t \leftarrow x^{t-1}+\frac{\epsilon}{2}\nabla_x \log p(x^{t-1}) +\sqrt{\epsilon} z^t,
\end{align*}
where $z^t$ is an independently generated i.i.d. Gaussian noise and $\epsilon$ is a small constant. Here, $\nabla_x \log p(x)$ is known as the score function of the distribution $p(x)$ we desire to sample from. It can be shown that under certain conditions \citep{sinho2023logconcave}, we obtain iterates distributed according to the target distribution $p(x)$ as $\epsilon$ tends to zero and number of iterations tends to infinity. Langevin dynamics sampling procedure suggests that we can attempt to sample from an unknown distribution as long as we can estimate the score function of this distribution at each data point, which is the key observation in current diffusion models designed for generative tasks. In practice, deep neural networks are trained to minimize variants of  score matching objective for fitting the score function. 

Existing literature on the theory of diffusion models typically establish convergence of diffusion process when the learned score function approximates the score of unknown data distribution well, but in reality only empirical approximation is available due to finite training samples and limited neural network (NN) capacity. Current literature falls short in understanding the role of NN approximation error for score-based generative models and it is also difficult to characterize the distribution from which these models sample in practice. However, in \cite{pidstrigach2022scorebased, yi2023generalization, yoon2023diffusion}, the authors show NN-based  score-based generative models given finite training data usually generalize well due to approximation errors introduced by limited model capacity and also optimization errors, recognizing the critical role NN approximation error plays in effectiveness of current large diffusion models. This work contributes to  understanding neural network approximation error in finite  data regime when trained with score matching or denoising score matching objective, which is crucial for understanding neural network-based score-based generative models. Specifically, we answer the following question:
\begin{samepage}
\begin{center}
\textbf{How do NNs approximate the distribution when trained with a (denoising) score matching objective given finite samples and a limited number of neurons?}
\end{center}
\end{samepage}
To summarize, we reframe the (denoising) score matching problem with two-layer neural network as a convex program and characterize the exact form of predicted score function for two-layer neural networks with finite data samples. We establish convergence result for neural network-based Langevin sampling, which serves as a core backbone of nowadays generative models used in application. Our convex program for score matching objective bypasses the Jacobian computation issue for piecewise linear activation function such as ReLU, which stabilizes the training procedure and can have practical benefits. All theoretic findings are corroborated with simulation resutls.
\vspace{-0.2cm}
\section{Background}
\vspace{-0.2cm}
Diffusion model has been shown to be useful in various generative tasks including image generation \citep{ho2020denoising}, audio generation \citep{zhang2023survey}, and text generation \citep{wu2023ardiffusion}. Variants of diffusion models such as denoising diffusion implicit model \citep{song2022denoising} have been designed to speedup sample generation procedure. The key to score-based diffusion model is the estimation of score function at any data point. In practice, a deep neural network model $s_\theta$ is trained to minimize variants of the score matching objective $\mathbb{E}[\|s_\theta(x)-\nabla_x\log p_{\text{data}}(x)\|_2^2]$ and is used for score function estimation. The score matching objective can be shown to be equivalent up to a constant to
\begin{equation}\label{sm_obj_1}
\mathbb{E}_{p_\text{data}(x)}\left[\text{tr}\left(\nabla_x s_\theta(x)\right)+\frac{1}{2}\|s_\theta(x)\|_2^2\right]
\end{equation}
  which is more practical since $\nabla_x\log p_{\text{data}}(x)$ is usually not directly available. To help alleviate the computation overhead in computing trace of Jacobian in (\ref{sm_obj_1}) for deep neural network and high dimensional data, sliced score matching \citep{song2019sliced} which exploits trace estimation method for trace of Jacobian evaluation has been proposed. Another variant used more commonly nowadays in conjunction with annealed Langevin sampling  is denoising score matching \citep{vincent2011connection} which considers sampling from a perturbed distribution and totally circumvents the computation of trace of Jacobian.

  Theory guarantees for diffusion models relate to convergence of log-concave sampling procedure which contains Langevin dynamics. Prior literature establishes convergence of final distribution of log-concave sampling procedure to ground truth data distribution under mild condition with exact score function at each sample point being known \citep{sinho2023logconcave}. Recent work \cite{li2023generalization} characterizes the generalization error for NN-based score models with bounds in number of neurons and obtain vanishing generalization gap when number of neurons tends to infinity, i.e., when the approximation error vanishes. Though neural network approximation error has been recognized to be core to the generalization capability of  diffusion models in deep learning, existing work falls short in characterizing the exact score function learned by neural network with finite samples. Our work focuses on analyzing what neural network-based score model learns in finite regime. Specifically, we show that the score matching objective fitted with two-layer neural network  can be reparametrized as a quadratic convex program and solved directly to global optimality and the predicted score function will be piecewise linear with kinks only at training data points. We also investigate cases where the convex program can be solved analytically and we observe that the predicted score function may not integrate to be concave and thus only convergence to local stationary point is guaranteed.

  Besides theoretic interest mentioned above, our convex programs may have practical benefit since they stabilize the training procedure due to convexity. Moreover, for commonly used activation function such as ReLU, trace of Jacobian involves threshold function which has zero gradient almost everywhere. Therefore, conventional gradient-based optimizers may face difficulties minimizing the training objective. Our convex programs bypass this Jacobian computation issue and thus gain advantage.

  To our best knowledge, this is the first work characterizing the exact score function learned by two-layer neural network with finite data samples and this is also the first convex program derived for score-matching objective. Our work is closely related to prior convex neural network theories \citep{pilanci2020neural,sahiner2021vectoroutput,ergen2023globally} which consider mainly squared loss instead.  A recent work \cite{zeno2023minimumnorm} tackles very similar problems as ours and also studies the NN approximation error in score matching fitting, though there are several key differences which we defer to Appendix \ref{prior_append} for sake of page limitation. In below, we present the convex program derived for score matching objective in Section \ref{sm_section} with exact score characterization and convergence results established. We further delve into the denoising score matching fitting in Section \ref{dsm_section}. We present simulation results with both Langevin Monte Carlo and annealed Langevin sampling with our convex score predictor in Section \ref{simu_sec}. Conclusion and future work is discussed in Section \ref{conclusion}.
  
\textbf{Notation. }We first introduce some notations we will use in later sections. 
 We use $\signfcn(x)$ to denote the sign function taking value $1$ when  $x\in [0,\infty)$ and $-1$ otherwise, and  $\indicfcn$ to denote the $0$-$1$ valued indicator function taking value $1$ when the argument is a true Boolean statement. For any vector $x$, $\signfcn(x)$ and $\indicfcn\{x\geq 0\}$ applies elementwise. We denote the pseudoinverse of matrix $A$ as $A^\dagger$. We denote subgradient of a convex function $f:\mathbb{R}^d\rightarrow\mathbb{R}$ at $x\in\mathbb{R}^d$ as $\partial f(x)\subseteq \mathbb{R}^d$. For any vector $x$, len($x$) denote the dimension of $x.$ Standard asymptotic notation is used, i.e., for any sequence $s_n\in\mathbb{R}$ and any given $c_n\in\mathbb{R}^+$, we use $s_n=\mathcal{O}(c_n)$ and $s_n=\Omega(c_n)$ to represent $s_n<\kappa c_n$ and $s_n>\kappa c_n$ respectively for some $\kappa>0.$ We use $s_n=\Theta(c_n)$ (also $a\asymp b$) when both $s_n=\mathcal{O}(c_n)$   and $s_n=\Omega(c_n)$.
 \vspace{-0.2cm}
 \section{Score Matching}\label{sm_section}
 \vspace{-0.2cm}
In this section, we derive convex program for score matching fitting problem with two-layer neural network and establish convergence results for neural network-based Langevin sampling procedure. We detail the neural network architecture being studied in Section \ref{s3_architecture} and present the corresponding convex program in Section \ref{cvx_theory1}, score prediction characterization and convergence theory is included in Section \ref{convergence}. 
For sake of clarity, we present results for NN without skip connection here in the main content and leave results with more general architecture in Appendix \ref{sec3_more_archi}.
\vspace{-0.2cm}
\subsection{Score Matching Problem and Neural Network Architectures}\label{s3_architecture}
\vspace{-0.2cm}
Let $s_\theta$ denote a neural network parameterized by parameter $\theta$ with output dimension the same as input data dimension, which is required for score matching estimation and is captured by specific U-Net used in nowadays diffusion model implementation. With $n$ data samples, the empirical version of score matching objective (\ref{sm_obj_1}) is 
\begin{equation*}
\SM(s_\theta(x))= \sum_{i=1}^n \mbox{tr}\left(\nabla_{x_i} s_\theta({x_i})\right)+\frac{1}{2}\|s_\theta(x_i)\|_2^2.
\end{equation*}
The final training loss we consider is the above score matching objective together with weight decay term, which writes
\begin{equation}\label{train_obj}
\min_{\theta} ~\SM(s_\theta(x))+\frac{\beta}{2}\|\theta'\|_2^2,
\end{equation}
where $\theta'\subseteq \theta$ denotes the parameters to be regularized. We note that a non-zero weight decay is indeed core for the optimal value to stay finite, see \textcolor{black}{Appendix \ref{unbound_explain} } for explanation, which rationalizes the additional weight decay term involved here. Let $m$ denote number of hidden neurons. Consider   two-layer neural network  of general form  as below
\begin{equation}\label{general_arc}
s_\theta(x)=W^{(2)}\sigma\left(W^{(1)}x+b^{(1)}\right)+Vx+b^{(2)},
\end{equation}
with activation function $\sigma$, parameter $\theta=\{W^{(1)},b^{(1)},W^{(2)}, b^{(2)},V\}$ and $\theta'=\{W^{(1)},W^{(2)}\}$ where $x\in\mathbb{R}^{d}$ is the input data, $W^{(1)}\in\mathbb{R}^{m\times d}$ is the first-layer weight, $b^{(1)}\in\mathbb{R}^{m}$ is the first-layer bias, $W^{(2)}\in\mathbb{R}^{d\times m}$ is the second-layer weight, $b^{(2)}\in\mathbb{R}^{d}$ is the second-layer bias and $V\in\mathbb{R}^{d\times d}$ is the skip connection coefficient. 
\subsection{Convex Programs}\label{cvx_theory1}
\vspace{-0.1cm}
We describe separately the derived convex program for univariate data and multivariate data here since our score prediction characterization and convergence result in below (Section \ref{convergence}) focuses on univariate data and thus presenting the univariate data convex program explicitly helps improve readability.

\textbf{Univariate Data.} Consider training data $x_1,\hdots,x_n\in \mathbb{R}$. Score matching fitting with objective  (\ref{train_obj}) is equivalent to solving a quadratic convex program in the sense that both problems have same optimal value and an optimal NN parameter set which achieves minimal loss can be derived from the solution to the corresponding convex program. We detail this finding in the following theorem,
\begin{theorem}\label{thm3}
When $\sigma$ is ReLU or absolute value activation and $V=0$, denote the optimal score matching objective value (\ref{train_obj}) with $s_\theta$ specified in (\ref{general_arc})  as $p^*,$ when $m\geq \text{len}(y)$ and $\beta\geq  1\footnote{Note when $\beta<1$, the optimal value to problem  (\ref{train_obj}) may be unbounded, see \textcolor{black}{Appendix \ref{unbound_explain} } for explanation.},$  
\begin{equation}\label{relu_noskip_formula}
\begin{aligned}
p^*=& \min_{y}\quad \frac{1}{2}\|Ay\|_2^2+b^Ty+\beta\|y\|_1\,,
\end{aligned}
\end{equation}
where the entries of $A$ are determined by the pairwise distances between data points, and the entries of $b$ correspond to the derivative of $\sigma$ evaluated at entries of $A$ (see Appendix \ref{thm31_proof} for the formulas).
\end{theorem}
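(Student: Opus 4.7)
The plan is to follow the convex-reformulation strategy for two-layer ReLU networks developed in prior convex NN theory, adapted to the score-matching loss. I first exploit the positive homogeneity of $\sigma$ (shared by ReLU and absolute value) to normalize every hidden neuron so that $|w_j^{(1)}|=1$, absorbing the scale into $w_j^{(2)}$. The AM-GM inequality $\tfrac12\!\left((w_j^{(1)})^2+(w_j^{(2)})^2\right)\geq |w_j^{(1)}||w_j^{(2)}|$, saturated when $|w_j^{(1)}|=|w_j^{(2)}|$, then converts the weight-decay penalty into a pure $\ell_1$ term on the rescaled outer weights $v_j$. After this step each neuron is characterized only by a sign $s_j\in\{\pm 1\}$ and a kink location $c_j\in\mathbb{R}$, with $s_\theta(x)=\sum_j v_j\sigma(s_j(x-c_j))+b^{(2)}$ and regularizer $\beta\sum_j|v_j|$.

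The next step is to reduce the continuous kink parameter to a finite set of templates indexed by $(s,c)$ with $s\in\{\pm 1\}$ and $c\in\{x_1,\dots,x_n\}$, plus a linear template accounting for the limiting case $c\to\pm\infty$. The justification is representer-style: the finite-data objective depends on each neuron only through its output values and one-sided slopes at the $n$ training points, so neurons with kinks in the same inter-data interval can be aggregated, and an optimal configuration can always be realized using data-point kinks (with the unregularized bias $b^{(2)}$ absorbing any global shift). Collapsing neurons into one scalar $y_k$ per template yields an exact reparametrization in which $s_\theta(x_i)=(Ay)_i+b^{(2)}$ with $A_{ik}=\sigma(s_k(x_i-x_k))$, and $\sum_i\nabla_x s_\theta(x_i)=b^T y$ with $b_k=s_k\sum_i\sigma'(s_k(x_i-x_k))$; both $A$ and $b$ thus depend only on pairwise data-point distances and $\sigma,\sigma'$ evaluated at them, matching the theorem statement.

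Substituting these expressions and optimizing out the unregularized scalar $b^{(2)}$ (which simply subtracts the column mean of $A$, leaving a centered matrix that I still call $A$) leaves exactly $\tfrac12\|Ay\|_2^2+b^T y+\beta\|y\|_1$. I would separately verify the $\beta\geq 1$ condition by showing that below this threshold the NN problem becomes unbounded below (matching Appendix~\ref{unbound_explain}) while at and above it the two optimal values agree, and invoke $m\geq\text{len}(y)$ to construct, from any convex optimizer $y$, an explicit two-layer NN with at most $m$ hidden neurons realizing the same loss (one neuron per nonzero $y_k$, with $|w_j^{(1)}|=|w_j^{(2)}|=\sqrt{|y_k|}$).

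The main technical obstacle will be making the kink-reduction step fully rigorous: the derivative $\sigma'(0)$ is ambiguous for ReLU, and the quadratic dependence of $s_\theta(x_i)^2$ on an interior kink means a naive piecewise-affine perturbation argument does not suffice. I would address this through a representer-theorem argument that uses the subgradient optimality of the $\ell_1$-penalized quadratic together with a fixed canonical convention for $\sigma'(0)$, and by arguing both directions of the equivalence separately so that inequalities can be chained without having to handle the ambiguity simultaneously on both sides.
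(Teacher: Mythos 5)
Your route is genuinely different from the paper's: you work entirely in the primal, using rescaling plus a representer-style aggregation of kinks onto data points, whereas the paper (Appendix \ref{thm31_proof}) forms the Lagrangian dual of the rescaled problem, discretizes the bias in the \emph{dual constraints} (where $z^T(x-1c)_+$ is piecewise linear in $c$ with breakpoints at the data), invokes Slater's condition via Lemma \ref{lemma_2} for $\beta>1$, and takes the bidual. Both directions of the equality are then closed by the same explicit reconstruction you describe ($|w_j|=|\alpha_j|=\sqrt{|y_k|}$, one neuron per template). So the overall architecture of your argument is sound, and the reconstruction direction matches the paper's.

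However, there is a concrete gap in your plan for the hard direction ($p^*\geq$ convex value), precisely at the point you flag: resolving $\sigma'(0)$ by ``a fixed canonical convention'' does not work, and the issue is not merely the value of $\sigma'$ at zero but the structure of the dictionary. Take a ReLU neuron with kink $c$ strictly inside $(x_k,x_{k+1})$ and outer weight $v$. Splitting it as $a(x-x_k)_++b(x-x_{k+1})_+$ with $a=v(x_{k+1}-c)/(x_{k+1}-x_k)$, $b=v(c-x_k)/(x_{k+1}-x_k)$ preserves the fit at every data point and preserves $|a|+|b|=|v|$, but the trace-of-Jacobian term changes from $v\,|\{i:x_i\geq c\}|=v(n-k)$ to $a(n-k+1)+b(n-k)=v(n-k)+a\neq v(n-k)$, because the template at $x_k$ counts $x_k$ itself as active while the interior kink does not. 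The aggregation is exact only if the dictionary contains \emph{both} one-sided atoms at each data point — the $\indicfcn\{\cdot\geq 0\}$ and $\indicfcn\{\cdot>0\}$ versions — which is exactly why the paper's ReLU program has $y\in\mathbb{R}^{4n}$ with $b=[1^TC_1,1^TC_2,-1^TC_3,-1^TC_4]^T$ rather than the $2n$ columns a single convention would give, and why the reconstructed network in (\ref{for_b0}) places some kinks at $x_j\pm\epsilon$ and only attains the convex value in the limit $\epsilon\to 0$ (i.e.\ $p^*$ is an infimum approached from one side). Omitting half of these atoms changes the optimal value in general, since the sign of $y_k$ determines which indicator count is favorable. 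Your proof becomes correct once you (i) enlarge the template set to include both one-sided limits at each data point and (ii) state the matching direction as a limiting construction rather than an attained minimum; as written, the single-convention program you define need not equal $p^*$. A minor additional point: the extra ``linear template for $c\to\pm\infty$'' is redundant here, since a kink at $x_1$ (resp.\ $x_n$) already reproduces any neuron active on all data points at the same $\ell_1$ cost, with the constant offset absorbed by the unregularized $b^{(2)}$.
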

\vspace{-0.2cm}
\begin{proof}
See Appendix \ref{thm31_proof}.
\end{proof}
\vspace{-0.2cm}
More precisely, when $\sigma$ is absolute value function, quadratic term coefficient $A\in\mathbb{R}^{n\times 2n}$  is formed by first taking pairwise distance between data points as $[A_1]_{ij}=|x_i-x_j|,$ then we normalize $A_1$ column-wise with mean reduced to form $\bar A_1=\left(I-\frac{1}{n}11^T\right)A_1$, the desired $A$ is simply a concatenation of two copies of $\bar A_1$ as $A=[ \bar A_1,\bar A_1].$ Linear term coefficient  $b\in\mathbb{R}^{2n}$ is column sum of two $n\times n$ matrices $[C_1]_{ij}=\signfcn(x_i-x_j)$ and $[C_2]_{ij}=\signfcn(-x_i+x_j)$ and formally writes $b=[1^TC_1,-1^TC_2]^T$. Once an optimal solution $y^\star$ to the convex program (\ref{relu_noskip_formula}) has been derived, we can reconstruct an optimal NN parameter set $\{{W^{(1)}}^\star,{W^{(2)}}^\star,{b^{(1)}}^\star,{b^{(2)}}^\star\}$ that achieves minimal training loss simply from data points $\{x_1,\cdots,x_n\}$ and $y^\star.$ See Appendix \ref{thm31_proof} for the reconstruction procedure. Given all this, with $y^\star$ known, for any test data $\hat x$, the predicted score is  given by
\[\hat y(\hat x)=\sum_{i=1}^{n} (|y^\star_i|+|y^\star_{i+n}|)|\hat x-x_i|+b_0^\star,\]
where $b_0^\star=-\frac{1}{n}1^T([A_1,A_1]y^\star).$ Remarkbly,  the optimal score is a  piecewise linear function with breakpoints only at a subset of data points. When training data points are highly separated, the optimal score approximately corresponds to the score function of a mixture of Gaussians with centroids at $\{\hat x : \hat y(\hat x)=0\}$. The breakpoints delineate the ranges of each Gaussian component. 



\textbf{Multivariate Data.} To state the convex program for multivariate data, we first introduce the concept of arrangement matrices.  When $d$ is arbitrary, for data matrix $X\in\mathbb{R}^{n\times d}$  and any arbitrary vector $u\in\mathbb{R}^d$, We consider the set of diagonal matrices 
 \[\mathcal{D}:=\{\text{diag}(\indicfcn\{Xu\geq 0\})\},\]
 which takes value $1$ or $0$ along the diagonal that indicates the set of possible arrangement activation patterns for the ReLU activation. Indeed, we can enumerate the set of sign patterns as $\mathcal{D}=\{D_{i}\}_{i=1}^{P}$ where $P$ is bounded by
 \[P\leq 2r\left(\frac{e(n-1)}{r}\right)^r\]
for $r=\text{rank}(X)$ \citep{pilanci2020neural,stanley2004introduction}. Since the proof of Theorem \ref{thm3} is closely tied to reconstruction of optimal neurons and does not trivially extend to multivariate data, we instead build on \cite{mishkin2022fast} and employ an alternative duality-free proof to derive our conclusion for multivariate data. The result holds for zero  $b^{(1)},b^{(2)},V$ and $\beta$, i.e., when there is no bias term, skip connection, and weight decay added. We present here result for ReLU $\sigma$. See Appendix \ref{thm32_proof} for conclusion for the case when $\sigma$ is absolute value  activation. 
\begin{theorem} \label{sm_nd_thm}
When $\sigma$ is ReLU, $b^{(1)},b^{(2)},V$ and 
 $\beta$ all zero,  denote the optimal score matching objective value (\ref{train_obj}) with $s_\theta$ specified in (\ref{general_arc}) as $p^*$, when $m\geq 2Pd$, under \textcolor{black}{ Assumption \ref{multi_assum}},
\begin{equation}\label{high_dim_cvx0}
p^*=\min_{W_i} \frac{1}{2} \left\|\sum_{i=1}^P D_iX W_i\right\|_F^2+\sum_{i=1}^P \text{tr}(D_i)\text{tr}(W_i).
\end{equation}
\end{theorem}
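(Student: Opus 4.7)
The plan is to unroll the two-layer network at the neuron level, group neurons by activation pattern, and then show that the resulting finite-sum has an exact unconstrained convex reformulation. Concretely, writing the rows of $W^{(1)}$ as $w_j^\top\in\mathbb R^{1\times d}$ and the columns of $W^{(2)}$ as $v_j\in\mathbb R^d$, the (skip-free, bias-free) ReLU network is $s_\theta(x_k)=\sum_{j=1}^m v_j(w_j^\top x_k)_+$. Each neuron $j$ realizes one of the $P$ arrangement patterns $D_{i(j)}\in\mathcal D$, so on the training set the ReLU output equals $(D_{i(j)})_{kk}\,w_j^\top x_k$ and the ReLU derivative along neuron $j$ at $x_k$ equals $(D_{i(j)})_{kk}$. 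Substituting both identities into $\SM$ rewrites the two terms of the objective purely in terms of neuron parameters and their patterns.

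Aggregating neurons sharing a pattern, I would define $W_i:=\sum_{j:\,i(j)=i}v_jw_j^\top\in\mathbb R^{d\times d}$ for $i=1,\dots,P$. A short calculation (using the identity $\text{tr}(vw^\top)=w^\top v$) then collapses the output sum $\sum_k\|s_\theta(x_k)\|_2^2$ into $\bigl\|\sum_i D_iXW_i\bigr\|_F^2$, up to a harmless transposition of each $W_i$ that is absorbed by the free optimization, and collapses the trace sum $\sum_k\text{tr}(\nabla_xs_\theta(x_k))$ into $\sum_i\text{tr}(D_i)\,\text{tr}(W_i)$. This gives one direction for free: any admissible NN yields a $\{W_i\}$ feasible for the unconstrained convex program with matching objective, so $p^*\geq\min_{W_i}\bigl(\tfrac12\|\sum_iD_iXW_i\|_F^2+\sum_i\text{tr}(D_i)\text{tr}(W_i)\bigr)$.

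The main obstacle is the reverse direction: realizing an arbitrary $\{W_i\}$ from the unconstrained convex program by a valid ReLU network on at most $m=2Pd$ neurons. The implicit NN-side constraint is that each rank-one summand $v_jw_j^\top$ of $W_i$ uses a weight $w_j$ in the activation cone $\mathcal K_i:=\{w:(2D_i-I)Xw\geq 0\}$. Because $D_i$ corresponds to a realized arrangement, $\mathcal K_i$ has nonempty interior — which is precisely what Assumption \ref{multi_assum} is designed to guarantee — hence $\mathcal K_i-\mathcal K_i=\mathbb R^d$. Given any basis $u_1,\dots,u_d$ of $\mathbb R^d$, I would decompose $u_\ell=u_\ell^+-u_\ell^-$ with $u_\ell^\pm\in\mathcal K_i$ and write $W_i=\sum_\ell \alpha_\ell u_\ell^\top$ for coefficients $\alpha_\ell\in\mathbb R^d$ obtained by solving a linear system; splitting each term as $\alpha_\ell(u_\ell^+)^\top+(-\alpha_\ell)(u_\ell^-)^\top$ then produces an exact realization of $W_i$ by $2d$ ReLU neurons with cone-valid right factors, for a total of $2Pd$ neurons across patterns. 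Applying this construction to the minimizer $\{W_i\}^\star$ of the convex program yields a NN attaining the convex value, giving the matching inequality and hence equality. This cone-splitting realizability step, which exchanges a constrained neuron-level problem for an unconstrained matrix-level one, is where the duality-free perspective of \cite{mishkin2022fast} and Assumption \ref{multi_assum} both enter, and it is the only nontrivial piece of the argument.
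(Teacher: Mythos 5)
Your proposal is correct and follows essentially the same route as the paper: group neurons by arrangement pattern to aggregate the rank-one factors into $W_i$, drop the cone constraints to get the lower bound, and then use Assumption \ref{multi_assum} (i.e., $\mathcal{K}_i-\mathcal{K}_i=\mathbb{R}^d$) to split each input-weight direction as a difference of two cone elements, realizing the convex minimizer with $2d$ neurons per pattern. The only cosmetic differences are your transposed convention for $W_i$ (which, as you note, is absorbed since the trace is transpose-invariant and the feasible set is closed under transposition) and expanding $W_i$ in a fixed basis rather than factorizing $W_i^\star$ directly into rank-one terms as the paper does via \cite{mishkin2022fast}.
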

\vspace{-0.2cm}
\begin{proof}
See \textcolor{black}{Appendix \ref{thm32_proof}}.
\end{proof}
\vspace{-0.2cm}
  Prior work \citep{JMLR:v6:hyvarinen05a, lin2016estimation} observes that with linear activation, the optimal weight matrix of score fitting reduces to  empirical precision matrix which models the correlation between  data points and the authors exploit this fact in graphical model construction. Here we show that the optimal $W_i$'s solved for (\ref{high_dim_cvx0}) correspond to piecewise empirical covariance estimator and therefore the non-linear two-layer NN is a more expressive model compared to prior linear models. To see this, we first write $\tilde V=[\mbox{tr}(D_1)I,\mbox{tr}(D_2)I,\ldots,\mbox{tr}(D_P)I], W=[W_1,\ldots,W_P]^T,\tilde X=[D_1X,\ldots,D_PX]$, then the convex program (\ref{high_dim_cvx0}) can be rewritten as
\begin{equation}\label{highdim-cvx3}
\min_W \frac{1}{2}\big\|\tilde X W\big\|_F^2+ \langle \tilde V, W \rangle.
\end{equation}
When the optimal value is finite, e.g., $\tilde V \in \mathrm{range}(\tilde X^T \tilde X)$, an optimal solution to (\ref{highdim-cvx3}) is given by
\begin{equation*}
\begin{aligned}
W^\star&=-(\tilde X^T\tilde X)^{\dagger}\tilde V\\
&=-\begin{bmatrix}\sum_{k\in S_{11}}X_kX_k^T & \sum_{k\in S_{12}} X_kX_k^T &\cdots\\\sum_{k\in S_{21}}X_kX_k^T&\sum_{k\in S_{22}}X_kX_k^T&\cdots \\ &\cdots& \end{bmatrix}^{\dagger}\begin{bmatrix}\mbox{tr}(D_1)I\\\mbox{tr}(D_2)I\\\vdots\\\mbox{tr}(D_P)I\end{bmatrix},
\end{aligned}
\end{equation*}
where $S_{ij}=\{k: X_k^Tu_i\geq 0, X_k^Tu_j\geq 0\}$ and $u_i$ is the generator of $D_i=\text{diag}(\indicfcn\{Xu_i\geq 0\})$. The above expression for $W^\star$ can be seen as a (negative) piecewise empirical covariance estimator which partitions the space with hyperplane arrangements. When $P=1$ and $D_1=I$ , $|W^\star|=(\tilde X^T\tilde X)^\dagger$ reduces to the empirical precision matrix corresponding to linear activation model.
\vspace{-0.2cm}
\subsection{Score Prediction and Convergence Result}\label{convergence}
\vspace{-0.2cm}
In this section, we delve into the convex program (\ref{relu_noskip_formula}) and show that with distinct data points and large weight decay, (\ref{relu_noskip_formula}) can be solved analytically and the integration of predicted score function is always concave for ReLU activation, which aligns with theoretic assumptions for Langevin sampling procedures. We then establish convergence result for Langevin dynamics in this regime.  Though the same observation does not persist for absolute value activation, where the predicted score function may integrate to be non-concave and thus only convergence to stationary points holds. All notations in this section follow Section \ref{cvx_theory1}.

\textbf{Score Prediction. }Consider the case when $\sigma$ is ReLU and $V=0$, let $\mu=\sum_{i=1}^n x_i/n$ denote the sample mean and  $v=\sum_{i=1}^n (x_i-\mu)^2/n$ denote the sample variance. We know $A\in\mathbb{R}^{n\times 4n}$ and $y\in\mathbb{R}^{4n}$ following Appendix \ref{thm31_proof}. When $\beta> \|b\|_\infty,y^\star=0$ is optimal and the neural network will always predict zero score no matter what input it takes. When $\beta_1< \beta\leq \|b\|_\infty$ for some threshold $\beta_1$\footnote{See \textcolor{black}{Appendix \ref{relu_noskip_score}} for value of $\beta_1$.},   $y^\star$ is all zero except for the first and the  $3n$-th entry, which have value $(\beta-n)/2nv+t$ and $(n-\beta)/2nv+t$ for some $|t|\leq \frac{n-\beta}{2nv}$ respectively \footnote{See \textcolor{black}{Appendix \ref{relu_noskip_score}} for proof.}. Therefore, for any input data point $\hat x,$ the predicted score $\hat y$ is
\begin{equation}\label{pred_score}
\begin{cases}
\hat y=\frac{\beta-n}{nv}(\hat x-\mu),  ~~\qquad \qquad\qquad \qquad\quad x_1\leq \hat x  \leq x_n\\
\hat y=-(\frac{n-\beta}{2nv}+t)\hat x+(\frac{\beta-n}{2nv}+t)x_1+\frac{n-\beta}{nv}\mu,  ~\quad \hat x<x_1\\
\hat y=(\frac{\beta-n}{2nv}+t)\hat x-(\frac{n-\beta}{2nv}+t)x_n+\frac{n-\beta}{nv}\mu, \qquad \hat x>x_n
\end{cases}
\end{equation}
\begin{figure*}[h]
 \centering
\includegraphics[width=0.9\linewidth]{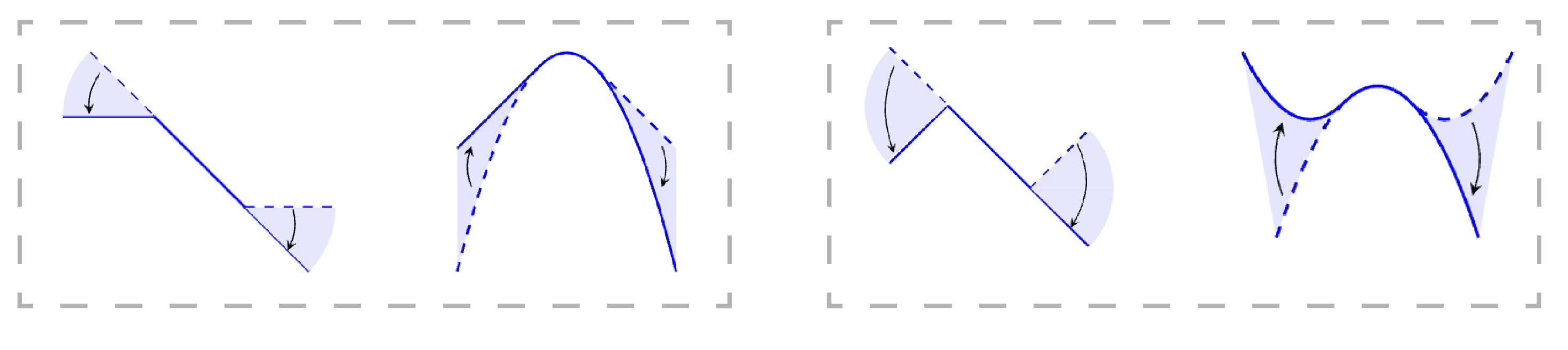}
\caption{Predicted score function and its integration for univariate data with two-layer neural network with ReLU activation (left) and absolut value activation (right).  The left subplot shows all optimal score predictions  by convex score predictor  for univariate input data of arbitrary distribution for certain weight decay range and the right subplot shows its integration. See Section \ref{convergence}  for details.}\label{relu_abs}
\vspace{-0.2cm}
\end{figure*}
for some $|t|\leq \frac{n-\beta}{2nv}.$ Left plot in Figure \ref{relu_abs} provides a visualization of (\ref{pred_score}) and its integration. Note within sampled data range, the predicted score function  aligns with score function of Gaussian distribution parameterized by sample mean $\mu$ and sample variance  $v$; outside data range, the predicted score function is a linear interpolation. The integration of score function is always concave in this case, and therefore Langevin dynamics sampling with predicted score function has well-established convergence guarantees \citep{Durmus2016SamplingFA, dalalyan2016theoretical, durmus2016nonasymptotic}.

Contrarily, when $\sigma$ is absolute value activation and still $V=0$, when $\beta$ goes below $\|b\|_\infty$ and stays above some threshold $\beta_2$ (see Appendix \ref{abs_noskip_score} for details), for any test data $\hat x$, the corresponding predicted score $\hat y$ is given by
\begin{equation*}
\begin{cases}
\hat y=\frac{\beta-n}{nv}(\hat x-\mu),\quad & x_1\leq \hat x\leq x_n,\\
\hat y=-2t\hat x+(\frac{\beta-n}{nv}+2t)x_1+\frac{n-\beta}{nv}\mu, \quad & \hat x<x_1,\\
\hat y=2t\hat x-(\frac{n-\beta}{nv}+2t)x_n+\frac{n-\beta}{nv}\mu,\quad & \hat x>x_n.
\end{cases}
\end{equation*}
Right plot in Figure \ref{relu_abs} depicts the score prediction and its integration. Within the sampled data range,  the score prediction corresponds to score of Gaussian distribution parameterized by sample mean and sample variance which is the same as the score predicted by ReLU neural network. The score prediction outside sampled data range is still a linear interpolation but with a different slope from what is  predicted by the ReLU neural network. This underscores the distinction between absolute value activation and ReLU activation.   The corresponding probability density when $\sigma$ being absolute value activation is log-concave only when $t=0$. Notably, the solution with $t=0$ corresponds to the unique minimum norm solution of the convex program (\ref{relu_noskip_formula}), highlighting its significance. Here the score prediction no longer corresponds to log-concave distribution except for the min-norm case and classic convergence theory has only theoretic assurance for converging to stationary points \cite{sinho2023logconcave}.

When skip connection is added, i.e., $V\neq 0,$ the optimal score prediction corresponding to $\beta>\|b\|_\infty$ is no longer zero and the corresponding optimal neural network parameter set is given by $\{W^{(1)}=0,b^{(1)}=0,W^{(2)}=0,b^{(2)}=\mu/v,V=-1/v\}$. For any test data $\hat x$, the predicted score is given by
\[\hat y=V\hat x +b^{(2)}=-\frac{1}{v}(\hat x-\mu),\]
which aligns with the score function of Gaussian distribution with mean being sample mean and variance being sample variance.Therefore, adding skip connection would change the zero score prediction to a linear function parameterized by sample mean and variance in the large weight decay regime.  See Appendix \ref{relu_skip_score} and \ref{abs_skip_score} for details.
\begin{center}
\begin{minipage}{0.46\textwidth}
\begin{algorithm}[H]
    \caption{Score Matching \label{alg_1}}
\begin{algorithmic}
   \STATE {\textbf{Input:}} training data $x_1,\hdots,x_n\in\mathbb{R}^d$
   \STATE minimize
   \[ \sum_{i=1}^n \frac{1}{2}s_\theta^2(x_i)+ \nabla_\theta s_\theta(x_i)+\frac{\beta}{2}\|\theta'\|_2^2\]
\end{algorithmic} 
\end{algorithm}
\end{minipage}
\hfill
\begin{minipage}{0.5\textwidth}
\begin{algorithm}[H]
      \caption{Langevin Monte Carlo \label{alg_2}}
   \label{alg:newton}
\begin{algorithmic}
\STATE {$~$}
   \STATE {\bfseries Initialize:} 
   $x^0\sim \mu_0(x)$
   \FOR{$t=1,2,...,T$}
   \STATE $z^t\sim \mathcal{N}(0,1)$
   \STATE $x^t\leftarrow x^{t-1}+\eta s_\theta(x^{t-1})+\sqrt{2\eta}z^t$
   \ENDFOR
\end{algorithmic}
\end{algorithm}
\end{minipage}

\end{center}
\textbf{Convergence Result.} Here we state our convergence result for Langevin sampling with NN-based score predictor.  Strong convergence guarantees for  Langevin Monte Carlo method are often contingent upon the log-concavity of the target distribution. Consider two-layer ReLU network without skip connection and consider training points $x_1,\ldots,x_n\in\mathbb{R}$. We have derived that the NN-predicted score function for any input distribution is always concave given $\beta_1<\beta\leq \|b\|_\infty$, thus we can exploit existing convergence results for log-concave sampling to derive the convergence of  Langevin dynamics with a neural network-based score function, which we state formally as the below theorem.
\begin{theorem}\label{conv_thm2}
When $s_\theta$ used in Algoritm \ref{alg_2} is of two-layer ReLU (without skip connection) trained to optimal with Algorithm \ref{alg_1} and $\textcolor{black}{\beta_1}<\beta< n,$ let $\pi$ denote the target distribution (defined below). In Algorithm \ref{alg_2}, for any $\epsilon\in [0,1]$, if we take step size $\eta\asymp \frac{\epsilon^2 nv}{n-\beta},$ then for  $\overline\mu=T^{-1}\sum_{t=1}^T x^t$, it holds that $\sqrt{\text{KL}(\overline \mu\|\pi)}\leq\epsilon$ after
\[O\left(\frac{(n-\beta) W_2^2(\mu_0,\pi)}{nv\epsilon^4}\right)\quad \text{iterations},\]
where $W_2$ denotes 2-Wasserstein distance and $\pi$ satisfies   
\[
\pi\propto
\begin{cases}
\exp(\frac{\beta-n}{2nv}x^2-\frac{\mu(\beta-n)}{nv}x), \qquad \qquad x_1\leq x\leq x_n,\\
\exp((\frac{\beta-n}{4nv}-\frac{t}{2})x^2+(\frac{\beta-n}{2nv}+t)x_1x\\\qquad+(\frac{\mu(n-\beta)}{nv})x+(\frac{n-\beta}{4nv}-\frac{t}{2})x_1^2), \qquad x<x_1,\\
\exp((\frac{\beta-n}{4nv}+\frac{t}{2})x^2-(\frac{n-\beta}{2nv}+t)x_nx\\\qquad +(\frac{\mu(n-\beta)}{nv})x+(\frac{t}{2}+\frac{n-\beta}{4nv})x_n^2),\qquad  x> x_n,
\end{cases}
\]
for some $|t|\leq \frac{n-\beta}{2nv}$.
\end{theorem}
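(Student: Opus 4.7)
The plan is to combine the explicit characterization of the optimal score function obtained in Section~\ref{convergence} with a standard non-asymptotic bound for Langevin Monte Carlo on log-concave targets. In the regime $\beta_1 < \beta < n$, the analysis of the convex program~(\ref{relu_noskip_formula}) shows that the optimal $y^\star$ has only its first and $3n$-th entries nonzero, so the predicted score $\hat y(\hat x)$ is exactly the piecewise linear function~(\ref{pred_score}). Integrating $\hat y$ in $\hat x$ and matching constants at the breakpoints $x_1,x_n$ (where $\hat y$ is continuous) reproduces, up to a normalizing constant, precisely the density $\pi$ stated in the theorem, so $\hat y = \nabla \log \pi$ by construction and Algorithm~\ref{alg_2} is a discretized overdamped Langevin diffusion for $\pi$.

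Two analytic facts drive the bound. First, $\pi$ is log-concave: in $[x_1,x_n]$ the quadratic exponent has leading coefficient $(\beta-n)/(2nv) < 0$, and in the two outer regions the leading coefficients $(\beta-n)/(4nv) \mp t/2$ are also non-positive under the constraint $|t| \le (n-\beta)/(2nv)$. Continuity of $\hat y$ at the two breakpoints makes $\log \pi$ globally $C^1$ and concave, and the at-least-linear decay of the exponent outside $[x_1,x_n]$ ensures $\pi$ is a proper probability density with finite second moment so that $W_2(\mu_0,\pi)$ makes sense. Second, $\hat y$ is globally Lipschitz: the three slope magnitudes $(n-\beta)/(nv)$, $|(n-\beta)/(2nv)+t|$, $|(\beta-n)/(2nv)+t|$ are all dominated by $L := (n-\beta)/(nv)$ under the same $|t|$ bound.

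With $\pi$ log-concave and score $L$-Lipschitz, the plan is to invoke the averaged-LMC bound for non-strongly log-concave targets from \cite{sinho2023logconcave}, which in one dimension takes the schematic form $\text{KL}(\overline\mu_T\|\pi) \lesssim W_2^2(\mu_0,\pi)/(T\eta) + L\eta$. Choosing $\eta \asymp \epsilon^2/L = \epsilon^2 nv/(n-\beta)$ balances the two terms at order $\epsilon^2$, and then $T = O(L W_2^2(\mu_0,\pi)/\epsilon^4) = O((n-\beta)W_2^2(\mu_0,\pi)/(nv\epsilon^4))$ iterations suffice to drive $\sqrt{\text{KL}(\overline\mu_T\|\pi)} \le \epsilon$. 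The main subtlety I expect is bookkeeping rather than conceptual: $\hat y$ is only piecewise $C^1$ with kinks at $x_1,x_n$, so any convergence theorem that assumes a globally smooth Hessian fails to apply verbatim. One must fall back on the weaker Lipschitz-score version of the Durmus--Moulines type bound that \cite{sinho2023logconcave} provides, together with a careful handling of the boundary case $|t| = (n-\beta)/(2nv)$, where the outer exponent degenerates from quadratic to linear decay but $\pi$ remains a valid log-concave density.
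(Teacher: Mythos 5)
Your proposal is correct and matches the paper's own argument: the paper likewise observes that the predicted score from (\ref{pred_score}) is piecewise linear with slopes between $-(n-\beta)/(nv)$ and $0$, so the integrated potential is weakly concave with an $(n-\beta)/(nv)$-Lipschitz gradient, and then invokes the averaged-LMC bound for (non-strongly) log-concave targets (case 1 of Theorem 4.3.6 in \cite{sinho2023logconcave}) with exactly your choice of $\eta$ and iteration count. Your additional bookkeeping on the outer-region slopes, the kinks at $x_1,x_n$, and the degenerate case $|t|=(n-\beta)/(2nv)$ is consistent with, and somewhat more careful than, the paper's two-sentence proof.
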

\begin{proof}
See \textcolor{black}{Appendix \ref{conv_proof}}.
\end{proof}
To the best of our knowledge, prior to our study, there has been no characterization of the sample distribution generated by Algorithm \ref{alg_2} when the score model is trained using Algorithm \ref{alg_1}.
\vspace{-0.2cm}
\section{Denoising Score Matching}\label{dsm_section}
\vspace{-0.2cm}
To tackle the difficulty in computation of trace of Jacobian required in score matching objective (\ref{sm_obj_1}),  denoising score matching has been proposed in \citep{vincent2011connection}. It then becomes widely used in practical generative models, especially for its natural conjunction with annealed Langevin sampling procedure, which forms the current mainstream noising/denoising paradigm of large-scale diffusion models being used in popular applications. In this section, we reframe denoising score matching fitting problem with two-layer neural network as a convex program which can be solved to global optimality stably. We empirically verify the validity of our theoretic findings in Section \ref{simu_sec}.

To briefly review, denoising score matching  first perturbs data points  with a predefined noise distribution and then estimates the score of the perturbed data distribution. When the noise distribution is chosen to be standard Gaussian, for some noise level $\epsilon>0$, the objective is equivalent to 
\begin{equation*}
\min_\theta~ \mathbb{E}_{x\sim p_\text{data}(x)}\mathbb{E}_{\delta\sim \mathcal{N}(0,I)} \left\|s_\theta(x+\epsilon \delta)-\frac{\delta}{\epsilon}\right\|_2^2,
\end{equation*}
with the empirical version given by
\begin{equation}\label{dddsm}
\DSM(s_\theta)=\sum_{i=1}^n \frac{1}{2}\left\|s_\theta(x_i+\epsilon \delta_i)-\frac{\delta_i}{\epsilon}\right\|_2^2,
\end{equation}
where $\{x_i\}_{i=1}^n$ are samples from $p_\text{data}(x)$ and $\{\delta_i\}_{i=1}^n$ are samples from standard Gaussian. The final training loss we consider is the above score matching objective together with weight decay term, which writes
\begin{equation}\label{dsm_obj}
\min_{\theta} ~\DSM(s_\theta(x))+\frac{\beta}{2}\|\theta'\|_2^2,
\end{equation}
where $\theta'\subseteq \theta$ denotes the parameters to be regularized. Unlike for score matching objective where weight decay is important for optimal objective value to stay finite, here for denoising objective, weight decay is unnecessary and can be removed. In our derived convex program, we allow $\beta$ to be arbitrarily close to zero so the result is general. Note (\ref{dsm_obj}) circumvents the computation of trace of Jacobian  and is thus more applicable for training tasks in large data regime. We consider the same neural network architecture  described in Section \ref{s3_architecture} except that here we only consider case for $V=0$. Like in Section \ref{cvx_theory1}, we still present our conclusion for univariate data and multivariate data separately so that we can easily demonstrate deeper investigation on univariate data findings.

\textbf{Univariate Data.} Consider training data $x_1,\hdots,x_n\in \mathbb{R}$. Denoising score matching fitting with objective  (\ref{dsm_obj}) is equivalent to solving a lasso problem in the sense that both problems have same optimal value and an optimal NN parameter set which achieves minimal loss can be derived from the solution to the corresponding lasso program. The difference between convex program of denoising score matching fitting and that of score matching fitting is that no linear term is included in this scenario. We detail this finding in the following theorem,
\begin{theorem}\label{thm4}
When $\sigma$ is ReLU or absolute value activation and $V=0$,   denote the optimal denoising score matching objective value (\ref{dsm_obj}) with $s_\theta$ specified in (\ref{general_arc}) as $p^*,$ when $m\geq \text{len}(y)$ and $\beta>  0$,  
\begin{equation}\label{thm4formula}
\begin{aligned}
p^*=& \min_{y}\quad \frac{1}{2}\|Ay+b\|_2^2+\beta\|y\|_1\,,
\end{aligned}
\end{equation}
where the entries of $A$ are determined by the pairwise distances between data points.
\end{theorem}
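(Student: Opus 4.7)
The plan is to follow the template of Theorem \ref{thm3}, but in a simpler form: the DSM loss only evaluates $s_\theta$ at the perturbed samples $\tilde x_i := x_i + \epsilon\delta_i$ and contains no trace-of-Jacobian term, so the linear term $b^\top y$ that appears in (\ref{relu_noskip_formula}) disappears and the DSM targets $\delta_i/\epsilon$ enter instead inside the squared norm, yielding the lasso-type form (\ref{thm4formula}). Apart from this simplification, the derivation reuses the standard two-layer convex reformulation: rescale first-layer weights by positive homogeneity of $\sigma$, reduce weight decay to an $\ell_1$ penalty on the second-layer weights, and enumerate hidden-unit arrangement patterns at $\{\tilde x_i\}$.

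First I would use the positive homogeneity of $\sigma$ (which holds for both ReLU and absolute value) to rescale each hidden neuron so that the first-layer weight has unit magnitude; minimizing $\tfrac{\beta}{2}(|W^{(1)}_j|^2+|W^{(2)}_j|^2)$ over this one-parameter family replaces it by $\beta|W^{(2)}_j|$, and in the univariate case it forces $W^{(1)}_j \in \{+1,-1\}$. The network then has the form $s_\theta(x)=\sum_j c_j\,\sigma(s_j x - t_j)+b^{(2)}$ with $s_j \in \{\pm 1\}$ and regularizer $\beta \sum_j |c_j|$. Next I would argue that without loss of optimality the breakpoints $t_j/s_j$ may be taken in $\{\tilde x_1,\ldots,\tilde x_n\}$: since the DSM loss sees $s_\theta$ only at these $n$ points, any candidate network can be replaced, while preserving its values at the $\tilde x_i$ and not increasing $\sum_j |c_j|$, by a piecewise-linear function whose knots lie among the $\tilde x_i$. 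Collecting the $2n$ resulting atoms $\sigma(\tilde x_i - \tilde x_j)$ and $\sigma(\tilde x_j - \tilde x_i)$ into a matrix $A \in \mathbb{R}^{n \times 2n}$ (whose entries are the signed pairwise distances as stated) and setting $b_i = -\delta_i/\epsilon$ with any affine residual absorbed into $b^{(2)}$, the DSM objective becomes $\tfrac12 \|Ay+b\|_2^2 + \beta \|y\|_1$.

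Given an optimal $y^\star$ of the resulting lasso, the optimal neurons $\{(s_j,t_j,c_j)\}$ can then be read off from its nonzero coordinates and assembled into $(W^{(1)\star},b^{(1)\star},W^{(2)\star},b^{(2)\star})$ exactly as in Appendix \ref{thm31_proof}; at most $\text{len}(y) = 2n$ neurons suffice, matching the hypothesis $m \ge \text{len}(y)$. The main obstacle is justifying the "knots at training points" reduction rigorously. This amounts to a univariate representer-type statement for ReLU or absolute-value networks with $\ell_1$ regularization: among all feasible networks matching a given set of values at $n$ points, one with knots only at those points and with minimal $\sum_j |c_j|$ is optimal. It can be proven either by a direct piecewise-linear interpolation argument on $\mathbb{R}$ (coalescing neurons supported in the same interval of the induced arrangement) or by convex duality on the regularized Hilbert-space formulation; the latter is the route that most closely parallels the proof of Theorem \ref{thm3}, while the former gives the cleanest explicit construction of $A$ from the signed pairwise distances.
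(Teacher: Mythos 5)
Your proposal is correct in substance but takes a genuinely different route from the paper. The paper's proof of Theorem \ref{thm4} (Appendix \ref{thm41_proof}) mirrors the duality machinery of Theorem \ref{thm3}: after the rescaling step (which you also use), it forms the Lagrangian dual in the residual variable, minimizes over the second-layer coefficients to obtain constraints $|z^T(x-1x_i)_+|\leq\beta$ indexed by all unit first-layer weights and biases, observes that the bias only matters through breakpoints at data points, invokes Slater's condition (trivially, since $z=0$ is strictly feasible here), takes the bidual to land on the lasso \eqref{thm4formula}, and closes the duality gap by exhibiting an explicit network achieving the lasso value. You instead propose a purely primal representer-type argument: relocate every knot to a data point without increasing the $\ell_1$ coefficient mass, so that the restricted problem (which is exactly the lasso) has the same optimal value. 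Both routes work; yours avoids the bidual computation and the strong-duality verification, while the paper's avoids the knot-relocation lemma. The one place your sketch is slightly too glib is that lemma itself: for a knot strictly between two consecutive data points the convex-combination split onto the two neighboring data points preserves the mass exactly, but for a ReLU knot lying below all data points the neuron contributes an \emph{affine} function on the sample, and with $V=0$ you cannot absorb a linear term into $b^{(2)}$; you must instead translate that knot to the smallest data point and absorb only the resulting constant offset into $b^{(2)}$ (knots above all data points contribute nothing and are simply deleted). With that case analysis, plus merging co-located neurons via the triangle inequality so that the restricted problem really is the lasso over $y\in\mathbb{R}^{2n}$, your argument is complete and matches the hypothesis $m\geq\mathrm{len}(y)$. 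Your sign convention for $b$ differs from the paper's (which uses the mean-subtracted label $\bar l$ and column-mean-subtracted $A$, precisely because optimizing out the unregularized $b^{(2)}$ projects onto $\mathbf{1}^\perp$), but this is cosmetic.
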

\vspace{-0.2cm}
\begin{proof}
See \textcolor{black}{Appendix \ref{thm41_proof}}.
\end{proof}
\vspace{-0.2cm}
For demonstration, consider $\sigma$ being ReLU, then coefficient matrix $A\in\mathbb{R}^{n\times 2n}$ is concatenation of two $n\times n$ matrices, i.e., $A=[\bar A_1, \bar A_2]$ where $\bar A_1=(I-11^T/n)A_1$ is the column-mean-subtracted version of $A_1$ and $[A_1]_{ij}=(x_i-x_j)_+$ measures pairwise distance between data points. Similarly, we have $\bar A_2=(I-11^T/n)A_2$ with $[A_2]_{ij}=(-x_i+x_j)_+.$ Label vector $b$ is the mean-subtracted version of original training label $l=[\sigma_1/\epsilon,\sigma_2/\epsilon,\cdots,\sigma_n/\epsilon]^T.$ Once an optimal $y^\star$ to (\ref{thm4formula}) has been derived, we can construct an optimal NN parameter set that achieves minimal training loss out of $y^\star$ and data points. See Appendix \ref{thm41_proof} for details. Under this construction, with value of $y^\star$ known, given any test data $\hat x$, NN-predicted score is
\[\hat y(\hat x)=-\sum_{i=1}^n y_i^\star(x-x_i)_+-\sum_{i=1}^n y_{n+i}^\star(-x+x_i)_++b_0^\star,\]
with $b_0^\star=\frac{1}{n}1^T([A_1,A_2]y^\star+l)$. We then proceed to present our multivariate data result, which holds for $b^{(1)},b^{(2)}$ and $\beta$ being all zero due to a change of our proof paradigm.

\textbf{Multivariate Data.} Let $L\in\mathbb{R}^{n\times d}$ denote the label matrix, i.e., $L_i=\delta_i/\epsilon$, and $\mathcal{D}=\{D_{i}\}_{i=1}^{P}$ be the arrangement activation patterns for ReLU  activation  as defined in Section \ref{cvx_theory1}, we have the following result for ReLU $\sigma$. See Appendix \ref{thm42_proof} for also convex program for absolute value activation.
\begin{theorem} \label{thm4_2}
When $\sigma$ is ReLU, $b^{(1)},b^{(2)},V$ and 
 $\beta$ all zero,  denote the optimal denoising score matching objective value (\ref{dsm_obj}) with $s_\theta$ specified in (\ref{general_arc}) as $p^*$, when $m\geq 2Pd$, under \textcolor{black}{ Assumption \ref{multi_assum}},
\begin{equation}\label{high_dim_cvx_2}
p^*=\min_{W_i} \frac{1}{2} \left\|\sum_{i=1}^P D_iX W_i-L\right\|_F^2.
\end{equation}
\end{theorem}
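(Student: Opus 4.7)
The plan is to mirror the strategy used for Theorem~\ref{sm_nd_thm}, replacing the score-matching trace-plus-squared-norm objective by the pure squared loss from (\ref{dddsm}). Because the DSM objective contains no Jacobian term and the hypothesis $\beta = 0$ removes the weight decay penalty, the reformulation reduces to an unconstrained linear least squares problem over the arrangement blocks $\{W_i\}$, and no linear trace term or $\ell_1$-type regularizer appears, explaining why (\ref{high_dim_cvx_2}) is visibly simpler than (\ref{high_dim_cvx0}).

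First, I would write the training loss in matrix form. Let $X \in \mathbb{R}^{n \times d}$ denote the matrix whose rows are the perturbed inputs $x_i + \epsilon \delta_i$, on which the arrangement patterns $\{D_i\}_{i=1}^P$ are defined in the statement. With $b^{(1)} = b^{(2)} = V = 0$, the stacked outputs of $s_\theta$ on the training batch are
\begin{equation*}
S(\theta) \;=\; \sigma\!\bigl(X (W^{(1)})^{T}\bigr)(W^{(2)})^{T} \;=\; \sum_{j=1}^{m} \sigma\!\bigl(X w_j^{(1)}\bigr)\,(w_j^{(2)})^{T},
\end{equation*}
so that the DSM loss equals $\tfrac{1}{2}\|S(\theta) - L\|_F^2$.

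Next I would establish the lower bound $p^* \geq$ RHS of (\ref{high_dim_cvx_2}). For every neuron $j$ there is an index $i(j) \in \{1,\dots,P\}$ with $D_{i(j)} = \mathrm{diag}(\indicfcn\{X w_j^{(1)} \geq 0\})$, so $\sigma(X w_j^{(1)}) = D_{i(j)} X w_j^{(1)}$. Grouping neurons by their activation pattern and setting $W_i := \sum_{j : i(j) = i} w_j^{(1)} (w_j^{(2)})^{T}$ gives $S(\theta) = \sum_{i=1}^P D_i X W_i$, so every NN parameter set induces a feasible point of (\ref{high_dim_cvx_2}) with the same objective value.

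For the matching upper bound, given any optimizer $\{W_i^{\star}\}$ of (\ref{high_dim_cvx_2}), I would factor each block by (at most) rank $d$ via SVD: $W_i^{\star} = \sum_{k=1}^d u_{ik} v_{ik}^{T}$. A rank-one contribution $u v^{T}$ equals the output of a single neuron with parameters $(w^{(1)}, w^{(2)}) = (u, v)$ provided that $u$ lies in the cone $K_i := \{u : (2 D_i - I) X u \geq 0\}$, since then $\sigma(X u) = D_i X u$. If $u$ does not lie in $K_i$, I would invoke the splitting $u = u^+ - u^-$ with $u^{\pm} \in K_i$ guaranteed by Assumption~\ref{multi_assum}, as in the proof of Theorem~\ref{sm_nd_thm} and in~\cite{mishkin2022fast}; this yields two neurons per rank-one piece and hence at most $2 P d$ neurons overall. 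The resulting network exactly reproduces $\sum_i D_i X W_i^{\star}$ and therefore attains the RHS value, closing the inequality.

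The main obstacle is the cone-splitting step, i.e., verifying that every rank-one factor of $W_i^{\star}$ can be implemented using two neurons whose first-layer weights actually activate pattern $D_i$. This is precisely where Assumption~\ref{multi_assum} is used, and is the only place the proof departs from elementary linear algebra; once it is invoked, the remainder is a direct adaptation of the SM argument, made easier by the absence of the $\sum_i \mathrm{tr}(D_i)\mathrm{tr}(W_i)$ linear term that had to be tracked in Theorem~\ref{sm_nd_thm}.
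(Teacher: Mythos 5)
Your proposal is correct and follows essentially the same route as the paper's proof: rewrite the batch output as $\sum_j D'_j X u_j v_j^T$, obtain the lower bound by relaxing to all $P$ arrangement patterns with unconstrained $W_i$, and recover a matching network by factoring each optimal $W_i^\star$ into rank-one pieces and applying the cone-splitting guaranteed by Assumption~\ref{multi_assum} (Theorem 3.3 of \cite{mishkin2022fast}), using at most $2Pd$ neurons. Your explicit remark that the arrangement patterns should be taken with respect to the perturbed inputs $x_i+\epsilon\delta_i$ is a useful clarification the paper leaves implicit, but it does not change the argument.
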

\vspace{-0.2cm}
\begin{proof}
See Appendix \ref{thm42_proof}.
\end{proof}
\vspace{-0.2cm}
The derived convex program (\ref{high_dim_cvx_2}) is a simple least square fitting and any convex program solver can be used to solve it efficiently.

\vspace{-0.2cm}
\section{Numerical Results}\label{simu_sec}
\vspace{-0.2cm}
Finally, we corroborate our previous findings with synthetic data simulation. For sake of page limitation, we present here in main text some of our simulation results for score matching fitting (Section \ref{cvx_theory1})  with Langevin sampling and denoising score matching fitting (Section \ref{dsm_section})  with annealed Langevin sampling. We defer more empirical observations to Appendix \ref{simu_supp}.
\vspace{-0.2cm}
\subsection{Score Matching Simulations}\label{score_simu_sec}
\vspace{-0.2cm}
\begin{figure*}[ht!]
  \makebox[\textwidth][c]{\includegraphics[width=1.0\linewidth]{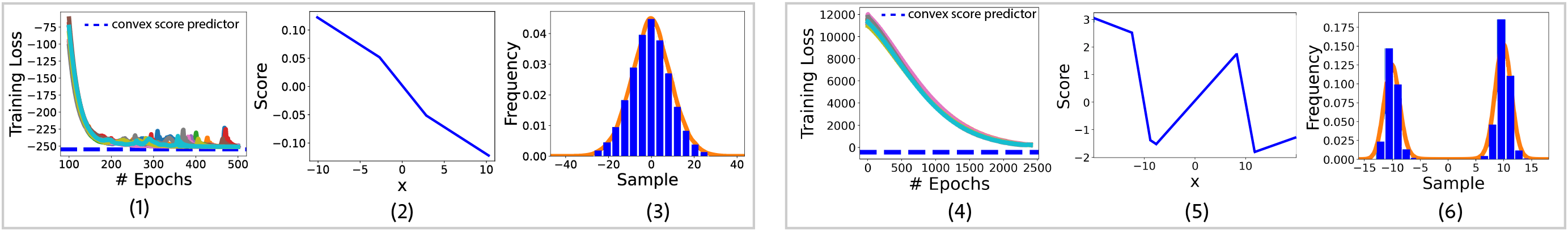}}
\vspace{-0.2cm}
\caption{Simulation results for 
 score matching tasks  with two-layer ReLU neural network. Left figure is for Gaussian data, right figure is for two-component Gaussian mixture. Sampling histogram is with Langevin dynamics. See Section \ref{score_simu_sec} for details.}\label{score_simu_}
 \vspace{-0.2cm}
\end{figure*}
For score matching fitting problems, we verify both the validity of our convex program (Equation \ref{relu_noskip_formula}) and our score prediction characterization (Figure \ref{relu_abs}) with univariate Gaussian data and we show that the derived convex program (Equation \ref{relu_noskip_formula}) is also able to capture two-component Gaussian mixture distribution. We also present sampling histograms with Langevin dynamics (non-annealed) aided by our convex score predictor. For univariate Gaussian data simulation, we set $\beta=\|b\|_\infty-1$. Plot (1) in Figure \ref{score_simu_} compares objective value of non-convex training with Adam optimizer and our convex program loss solved via CVXPY \cite{diamond2016cvxpy}. The dashed blue line denotes our convex program objective value which solves the training problem globally and stably. Plot (2) is for score prediction, which  verifies our analytical characterization in Section \ref{convergence} and aligns with Figure \ref{relu_abs}. Plot (3) shows sampling histogram via Langevin dynamics which recognizes the underline Gaussian as desired. The right figure in Figure \ref{score_simu_} repeats the same experiments for two-component Gaussian mixture distribution with a slightly small $\beta$ value since we known from Section \ref{convergence} that $\beta=\|b\|_\infty-1$ cannot capture Gaussian mixture distribution. Our convex program identifies the underline distribution accurately. See Appendix \ref{sm_simu_detail} for more experimental details.
\vspace{-0.2cm}
\subsection{Denoising Score Matching  Simulations}\label{dsm_simu_sec}
\vspace{-0.2cm}
\begin{figure*}[ht!]
 \makebox[\textwidth][c]{\includegraphics[width=1.0\linewidth]{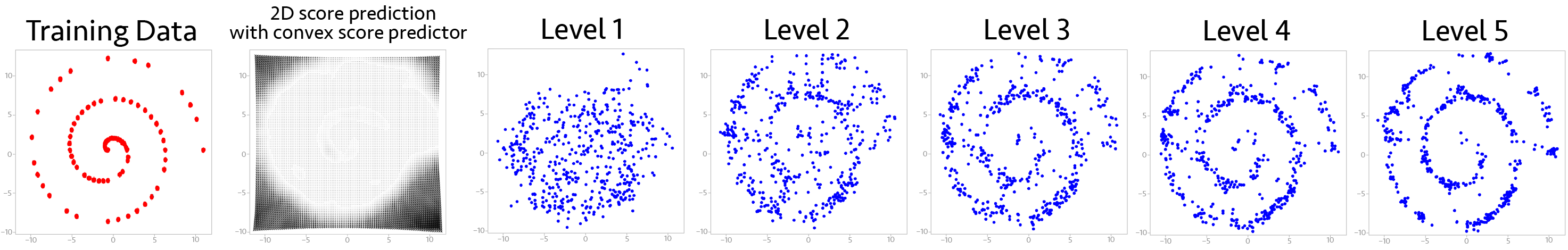}}
\vspace{-0.2cm}
\caption{2D simulation results for 
 denoising score matching tasks with our convex score predictor. The second figure shows vector field plot for score predicted by our convex score predictor. The right plots show denoising procedure with different noise levels in annealed Langevin sampling. See Section \ref{dsm_simu_sec} for details.}\label{dscore_simu_}
 \vspace{-0.1cm}
\end{figure*}
For denoising score matching fitting problems, we verify our derived program (\ref{high_dim_cvx_2}) for $2d$ spiral data  and present sampling results with annealed Langevin process integrated with our convex score predictor. For easier computation, we switch to a variant of program (\ref{high_dim_cvx_2}) in our implementation, see Appendix \ref{2d_exact} for details. The left plot in Figure \ref{dscore_simu_} shows the spiral training data  and the second left plot depicts the score predicted by our $2d$ convex score predictor solved with CVXPY \cite{diamond2016cvxpy}. It can be clearly observed that the score prediction already aligns with the training data shape.The five right plots in Figure \ref{dscore_simu_} are sampling results with annealed Langevin process aided with our $2d$ convex score predictor after different levels of denoising, see Appendix \ref{dsm_simu_detail} for experimental details. Our convex program for denoising score matching works well in capturing training data distribution.
\vspace{-0.2cm}
\section{Conclusion and Future Work}\label{conclusion}
\vspace{-0.2cm}
In this work, we analyze neural network-based diffusion models via  lens of convex optimization. We derive equivalent convex programs for (denoising) score matching fitted with two-layer neural network. For Langevin dynamics with NN-based score predictor, we first characterize the score prediction by solving the derived convex program analytically and we then establish convergence result based on existing convergence theory for log-concave sampling procedure. Notably for univariate data, for certain weight decay range, the predicted score would capture Gaussian distribution characterized by sample mean and sample variance no matter what input distribution is and for general weight decay, score prediction aligns with Gaussian mixture distribution when training data is highly separated. Besides theoretic interest, the derived convex program has potential empirical benefits since it bypasses the difficulty of using gradient-based optimizers due to the Jacobian terms. Our theoretic findings are corroborated with simulation results. For future work,  our proof technique can be extend easily to networks of arbitrary depth by considering convex reparameterizations (see e.g., \cite{ergen2023path, wang2023parallel}).

\newpage
\section{Acknowledgement}
This work was supported in part by the National Science Foundation (NSF) under Grant DMS-2134248; in part by the NSF CAREER Award under Grant CCF-2236829; in part by the U.S. Army Research Office Early Career Award under Grant W911NF-21-1-0242; and in part by the Office of Naval Research under Grant N00014-24-1-2164.
\bibliography{reference}

\begin{thebibliography}{10}

\bibitem{mosek}
M.~ApS.
\newblock {\em The MOSEK optimization toolbox}, 2019.

\bibitem{sinho2023logconcave}
S.~Chewi.
\newblock Log-concave sampling, 2023.

\bibitem{dalalyan2016theoretical}
A.~S. Dalalyan.
\newblock Theoretical guarantees for approximate sampling from smooth and log-concave densities, 2016.

\bibitem{diamond2016cvxpy}
S.~Diamond and S.~Boyd.
\newblock {CVXPY}: {A} {P}ython-embedded modeling language for convex optimization.
\newblock {\em Journal of Machine Learning Research}, 17(83):1--5, 2016.

\bibitem{durmus2016nonasymptotic}
A.~Durmus and E.~Moulines.
\newblock Non-asymptotic convergence analysis for the unadjusted langevin algorithm, 2016.

\bibitem{Durmus2016SamplingFA}
A.~Durmus and {\'E}.~Moulines.
\newblock Sampling from a strongly log-concave distribution with the unadjusted langevin algorithm.
\newblock {\em arXiv: Statistics Theory}, 2016.

\bibitem{ergen2023globally}
T.~Ergen, H.~I. Gulluk, J.~Lacotte, and M.~Pilanci.
\newblock Globally optimal training of neural networks with threshold activation functions, 2023.

\bibitem{ergen2023path}
T.~Ergen and M.~Pilanci.
\newblock Path regularization: A convexity and sparsity inducing regularization for parallel relu networks, 2023.

\bibitem{han2024neural}
Y.~Han, M.~Razaviyayn, and R.~Xu.
\newblock Neural network-based score estimation in diffusion models: Optimization and generalization, 2024.

\bibitem{ho2020denoising}
J.~Ho, A.~Jain, and P.~Abbeel.
\newblock Denoising diffusion probabilistic models, 2020.

\bibitem{JMLR:v6:hyvarinen05a}
A.~Hyv{{\"a}}rinen.
\newblock Estimation of non-normalized statistical models by score matching.
\newblock {\em Journal of Machine Learning Research}, 6(24):695--709, 2005.

\bibitem{li2023generalization}
P.~Li, Z.~Li, H.~Zhang, and J.~Bian.
\newblock On the generalization properties of diffusion models, 2023.

\bibitem{lin2016estimation}
L.~Lin, M.~Drton, and A.~Shojaie.
\newblock Estimation of high-dimensional graphical models using regularized score matching, 2016.

\bibitem{mishkin2022fast}
A.~Mishkin, A.~Sahiner, and M.~Pilanci.
\newblock Fast convex optimization for two-layer relu networks: Equivalent model classes and cone decompositions, 2022.

\bibitem{pidstrigach2022scorebased}
J.~Pidstrigach.
\newblock Score-based generative models detect manifolds, 2022.

\bibitem{pilanci2024complexity}
M.~Pilanci.
\newblock From complexity to clarity: Analytical expressions of deep neural network weights via clifford's geometric algebra and convexity, 2024.

\bibitem{pilanci2020neural}
M.~Pilanci and T.~Ergen.
\newblock Neural networks are convex regularizers: Exact polynomial-time convex optimization formulations for two-layer networks, 2020.

\bibitem{sahiner2021vectoroutput}
A.~Sahiner, T.~Ergen, J.~Pauly, and M.~Pilanci.
\newblock Vector-output relu neural network problems are copositive programs: Convex analysis of two layer networks and polynomial-time algorithms, 2021.

\bibitem{sohldickstein2015deep}
J.~Sohl-Dickstein, E.~A. Weiss, N.~Maheswaranathan, and S.~Ganguli.
\newblock Deep unsupervised learning using nonequilibrium thermodynamics, 2015.

\bibitem{song2022denoising}
J.~Song, C.~Meng, and S.~Ermon.
\newblock Denoising diffusion implicit models, 2022.

\bibitem{song2019sliced}
Y.~Song, S.~Garg, J.~Shi, and S.~Ermon.
\newblock Sliced score matching: A scalable approach to density and score estimation, 2019.

\bibitem{song2021scorebased}
Y.~Song, J.~Sohl-Dickstein, D.~P. Kingma, A.~Kumar, S.~Ermon, and B.~Poole.
\newblock Score-based generative modeling through stochastic differential equations, 2021.

\bibitem{stanley2004introduction}
R.~P. Stanley et~al.
\newblock An introduction to hyperplane arrangements.
\newblock {\em Geometric combinatorics}, 13(389-496):24, 2004.

\bibitem{vincent2011connection}
P.~Vincent.
\newblock A connection between score matching and denoising autoencoders.
\newblock {\em Neural computation}, 23(7):1661--1674, 2011.

\bibitem{wang2023parallel}
Y.~Wang, T.~Ergen, and M.~Pilanci.
\newblock Parallel deep neural networks have zero duality gap, 2023.

\bibitem{wu2023ardiffusion}
T.~Wu, Z.~Fan, X.~Liu, Y.~Gong, Y.~Shen, J.~Jiao, H.-T. Zheng, J.~Li, Z.~Wei, J.~Guo, N.~Duan, and W.~Chen.
\newblock Ar-diffusion: Auto-regressive diffusion model for text generation, 2023.

\bibitem{yi2023generalization}
M.~Yi, J.~Sun, and Z.~Li.
\newblock On the generalization of diffusion model, 2023.

\bibitem{yoon2023diffusion}
T.~Yoon, J.~Y. Choi, S.~Kwon, and E.~K. Ryu.
\newblock Diffusion probabilistic models generalize when they fail to memorize.
\newblock In {\em ICML 2023 Workshop on Structured Probabilistic Inference {\&} Generative Modeling}, 2023.

\bibitem{zeno2023minimumnorm}
C.~Zeno, G.~Ongie, Y.~Blumenfeld, N.~Weinberger, and D.~Soudry.
\newblock How do minimum-norm shallow denoisers look in function space?, 2023.

\bibitem{zhang2023survey}
C.~Zhang, C.~Zhang, S.~Zheng, M.~Zhang, M.~Qamar, S.-H. Bae, and I.~S. Kweon.
\newblock A survey on audio diffusion models: Text to speech synthesis and enhancement in generative ai, 2023.

\end{thebibliography}
\bibliographystyle{abbrv}
\newpage
\section*{Appendix}
\begin{appendix}
\section{More on Prior Work}\label{prior_append}
Here we make a note on difference between our work and  work \cite{zeno2023minimumnorm}, which  tackles very similar problems as ours though there are several key differences. In \citep{zeno2023minimumnorm}, the authors study shallow neural network trained for score denoisers and characterize the exact neural network output. The authors show contractive property for NN-based denoiser and prove NN-based denoiser is advantageous against eMMSE denoiser. In our work, we study the exact score matching objective (\ref{sm_obj_1}) which has not been considered in the other  work and establish convergence result for NN-based score predictor which will be much harder to prove for NN-based denoiser due to involvement of noise. Moreover, for denoising score matching objective, we derive convex programs for arbitrary weight decay for multivariate data  while the characterization in \cite{zeno2023minimumnorm} is for vanishing weight decay. For multivariate data, the authors of \cite{zeno2023minimumnorm} only consider  modified objective with data belongs to special subspaces while our convex program holds in general. Finally, our analysis is based on convex optimization theory and no convexification is considered in \cite{zeno2023minimumnorm}.   Our work can be viewed as complementary to \cite{zeno2023minimumnorm} in the sense that we study similar problems with   different objectives and constraints from different angles. 

Another work \cite{han2024neural} establishes approximation error bound between true score function $\nabla \log p(X)$ and the GD minimizer to score-matching objective, which also serve as an approximation error bound between our convex program and true score in some cases while our method bypasses the potential local optimum problem caused by GD and derives the exact score function being predicted. The final result in  \cite{han2024neural} is presented as error bound on predicted score and true score asymptotically in number of hidden neurons $m$ and number of data samples $N$ with expectation over noises added to data and random initialization. Our work doesn't have such a bound while we can solve the training problem globally and thus escape local minimum with a convex program (which only holds with Assumption 3.8 in the other work) and derive the score function for finite data and neurons, i.e., we know exactly what's the predicted score function analytically (see Section \ref{convergence} in our work) in certain regime while the other work only has an error bound on this. 
\section{Explanation for Unbounded Objective Value}\label{unbound_explain}
Here we illustrate via a simple example that weight decay is necessary for the optimal objective value to stay finite. Follow notation in Section \ref{sm_section}, 
 consider for example only one data point and one hidden neuron, then objective function for the neural network with ReLU activation and no skip connection would be 
\[\frac{1}{2}((xw+b)_+\alpha+b_0)^2+w\alpha \indicfcn(xw+b\geq 0)+\frac{\beta}{2}(w^2+\alpha^2). \]
WLOG consider $x=1,$ then when weight decay parameter $0\leq \beta <1,$ set $b=-w+\sqrt{1-\beta}$ and $b_0=0$ above, we get 
\[\frac{1-\beta}{2}\alpha^2+w\alpha+\frac{\beta}{2}(w^2+\alpha^2).\]
Then we can set $\alpha=-w$ and the above expression becomes $(\beta-1)w^2/2$. Thus the objective goes to minus infinity when $w$ goes to infinity.
\section{Proof in Section \ref{cvx_theory1}}
\subsection{Technical Lemmas}
\begin{lemma}\label{lemma_2}
The below constraint set is strictly feasible only when $\beta>1 .$
\[\begin{cases}
|z^T(x-1x_i)_+-1^T\indicfcn\{x-1 x_i\geq 0\}|\leq \beta \quad \\
|z^T(x-1 x_i)_+-1^T\indicfcn\{x-1x_i>0\}|\leq \beta \quad \\
|z^T(-x+1x_i)_++1^T\indicfcn\{-x+1x_i\geq 0\}|\leq \beta \quad \\
|z^T(-x+1x_i)_++1^T\indicfcn\{-x+1 x_i>0\}|\leq \beta \quad \\
z^T1 = 0
\end{cases} \forall i=1,\cdots,n\] 
\begin{proof}
Consider without loss of generality that $x_1<x_2<\cdots<x_n.$  Let $k=-\sum_{j=1}^m z_j(x_j-x_i)+m$ for some $1\leq m\leq n$, the first four constraints with $i=m$ are then $|z^Tx-(n+1)+k|, |z^Tx-(n+1)+k+1|,|k|,|k-1|$. When $i=n,$ the first constraint is $\beta\geq 1$. Thus $\beta>1$ is necessary for the constraint set to be strictly feasible. Since we can always find $z^\star$ satisfying
\[\begin{cases}
x^Tz^\star=n\\
1^Tz^\star=0\\
(x-1x_i)_+^Tz^\star-1^T\indicfcn\{x-1x_i\geq 0\}=0\quad \forall i=2,\cdots,n-1
\end{cases}\]
Note such $z^\star$ satisfies all constraints in the original constraint set when $\beta>1$. Therefore when $\beta>1,$ the original constraint is strictly feasible. 
\end{proof}
\end{lemma}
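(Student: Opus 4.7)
The statement is a necessity claim: strict feasibility implies $\beta > 1$. The plan is to exhibit a single index $i$ at which the indicator and ReLU terms in one of the four constraint families collapse to a fixed scalar of magnitude one, forcing $\beta \geq 1$; strict feasibility then upgrades this to $\beta > 1$.

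Without loss of generality sort the data so that $x_1 < \cdots < x_n$ and examine the first constraint family at $i = n$. Since $x_j - x_n \leq 0$ for every $j$, the vector $(x - 1 x_n)_+$ vanishes identically, so the $z$-dependent term disappears regardless of the choice of $z$. The indicator $\indicfcn\{x - 1 x_n \geq 0\}$ is nonzero only at coordinate $j = n$ (where $x_n - x_n = 0$ still satisfies the weak inequality), so $1^T \indicfcn\{x - 1 x_n \geq 0\} = 1$. The constraint therefore collapses to $|{-1}| \leq \beta$, giving $\beta \geq 1$ for every feasible $z$. Strict feasibility, which requires the inequality to hold with slack, then forces $\beta > 1$.

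To complete the lemma's equivalence (the sufficiency direction), I would construct $z^\star \in \mathbb{R}^n$ as a solution to the $n$ linear equations $x^T z^\star = n$, $1^T z^\star = 0$, and $(x - 1 x_i)_+^T z^\star = 1^T \indicfcn\{x - 1 x_i \geq 0\}$ for the interior indices $i = 2, \ldots, n-1$, which is a consistent $n \times n$ system when the $x_i$ are distinct. Under this choice, direct substitution shows that for every $i$ each of the four constraint expressions evaluates to a value in $\{-1, 0, 1\}$: at interior $i$ the third set of equations zeros out the first-family expression, while the remaining families pick up $\pm 1$ contributions from the discrepancy between the $\geq$ and $>$ versions of the indicator at coordinate $j = i$ combined with the algebraic identity $(-x + 1 x_i)_+ = (x - 1 x_i)_+ - (x - 1 x_i)$; at the extreme indices $i \in \{1, n\}$ the relations $1^T z^\star = 0$ and $x^T z^\star = n$ reduce each of the four expressions directly to a constant of absolute value at most one. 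Hence any $\beta > 1$ makes every constraint strict.

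The main obstacle is bookkeeping: carefully tracking the contributions from the strict versus non-strict indicator evaluations at the boundary coordinate $j = i$, and verifying that the linear system defining $z^\star$ is both solvable and compatible with strict bounds across all four constraint families at both interior and extreme indices. Once these scalar contributions are itemized coordinate-by-coordinate, the necessity step is immediate from the $i = n$ evaluation, and the sufficiency reduction to $|\cdot| \leq 1 < \beta$ is mechanical.
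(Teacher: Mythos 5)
Your proposal is correct and follows essentially the same route as the paper's proof: necessity by evaluating the first constraint family at the largest data point (where the ReLU term vanishes and the indicator sum equals one, forcing $|{-1}|\leq\beta$), and sufficiency via the identical linear system $x^Tz^\star=n$, $1^Tz^\star=0$, with the first-family expressions zeroed at interior indices. Your coordinate-by-coordinate bookkeeping showing all four expressions land in $\{-1,0,1\}$ is in fact slightly more explicit than the paper's verification.
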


\begin{lemma}
\label{lemma_1}
The below constraint set is strictly feasible only when $\beta>1.$
\[\begin{cases}
|z^T|x-1 x_i|-1^T\signfcn(x-1x_i)|\leq\beta\\
|z^T|-x+1x_i|+1^T\signfcn(-x+1x_i)|\leq\beta\\
z^T1=0
\end{cases} \forall i=1,\cdots,n\] 
\end{lemma}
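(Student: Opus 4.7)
The plan is to prove both directions of the equivalence—that strict feasibility holds if and only if $\beta > 1$—mirroring the two-step template used for Lemma \ref{lemma_2}. Necessity will follow from a single triangle-inequality manipulation applied at each index $i$, and sufficiency will be established by exhibiting an explicit discrete-Laplacian-type $z^\star$ and verifying that both absolute-value constraints collapse to the value $1$ at every index.

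For necessity, abbreviate $a_i := z^T |x - 1 x_i|$, $s_i^+ := 1^T \signfcn(x - 1 x_i)$, and $s_i^- := 1^T \signfcn(-x + 1 x_i)$, so the two scalar constraints read $|a_i - s_i^+| \leq \beta$ and $|a_i + s_i^-| \leq \beta$. The triangle inequality then gives $|s_i^+ + s_i^-| \leq 2\beta$. Using the paper's convention $\signfcn(0) = 1$, one checks entrywise that $\signfcn(t) + \signfcn(-t) = 2$ when $t = 0$ and $0$ otherwise, whence $s_i^+ + s_i^- = 2 \cdot |\{j : x_j = x_i\}| \geq 2$. This forces $\beta \geq 1$, and strict feasibility forces $\beta > 1$.

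For sufficiency, assume without loss of generality that the $x_i$'s are distinct and ordered $x_1 < \cdots < x_n$ (ties can be dealt with by an infinitesimal perturbation). I would set $w_j := 1/(x_j - x_{j-1})$ for $2 \leq j \leq n$ and $w_1 = w_{n+1} := 0$, and define $z_j^\star := w_j - w_{j+1}$. Then $\sum_j z_j^\star = 0$ telescopes, and the key computation is to evaluate the piecewise linear function $f(y) := \sum_j z_j^\star |x_j - y|$ at each data point. Its derivative on $(x_k, x_{k+1})$ equals $2 \sum_{j \leq k} z_j^\star = -2/(x_{k+1} - x_k)$ (using the telescoping partial sums), so $f$ drops by exactly $2$ between consecutive kinks; combining this with the evaluation $f(y) = \sum_j z_j^\star x_j = n - 1$ for all $y < x_1$ (which follows by Abel summation applied to $z_j^\star = w_j - w_{j+1}$) yields $a_i = f(x_i) = n - 2i + 1$. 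A direct count gives $s_i^+ = n - 2i + 2$ and $s_i^- = 2i - n$, so both $|a_i - s_i^+|$ and $|a_i + s_i^-|$ equal exactly $1$, which is strictly below $\beta$. Hence $z^\star$ is strictly feasible whenever $\beta > 1$.

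The main obstacle I anticipate is the sufficiency construction itself: the system $a_i = n - 2i + 1$ for all $i$, together with $z^T 1 = 0$, consists of $n + 1$ linear constraints in $n$ unknowns, so it is a priori overdetermined. The discrete-Laplacian ansatz $z_j^\star = w_j - w_{j+1}$ is what resolves this, simultaneously encoding the telescoping zero-sum and the correct slope structure of $f$; the Abel-summation evaluation $f(x_1) = n - 1$ is where the integer constants must line up precisely and is the delicate bookkeeping step. Necessity, by contrast, is a one-line application of the triangle inequality.
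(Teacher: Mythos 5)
Your proof is correct, and it takes a genuinely different (and in places more complete) route than the paper's. For necessity, the paper orders the data and compares the first constraint at $i=1$ and $i=n$, which gives $|z^Tx-n|\leq\beta$ and $|z^Tx-(n-2)|\leq\beta$ and hence $\beta>1$; you instead pair the two constraints at a \emph{single} index $i$ and use the identity $\signfcn(t)+\signfcn(-t)\in\{0,2\}$ to get $2\leq|s_i^+ + s_i^-|<2\beta$. Your version is cleaner: it needs no ordering, works verbatim with repeated data points, and (as a bonus) shows that $k$ coincident points would force $\beta>k$. For sufficiency, the paper merely asserts that the linear system $x^Tz=n-1$, $1^Tz=0$, $|x-1x_i|^Tz-1^T\signfcn(x-1x_i)=-1$ is solvable and that its solution also satisfies the second family of constraints, without exhibiting a solution or checking consistency; your discrete-Laplacian ansatz $z_j^\star=w_j-w_{j+1}$ with $w_j=1/(x_j-x_{j-1})$ is precisely such a solution (your computation gives $a_i-s_i^+=-1$ and $a_i+s_i^-=+1$ for every $i$, matching the paper's target values), so you have actually supplied the existence proof the paper omits. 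The one soft spot is your parenthetical that ties ``can be dealt with by an infinitesimal perturbation'': with genuinely repeated data points the set is \emph{not} strictly feasible for $\beta$ slightly above $1$ (your own necessity bound shows this), so distinctness is a real hypothesis for the sufficiency direction --- but the paper makes the same implicit assumption by writing $x_1<\cdots<x_n$, and the lemma as stated only claims the ``only if'' direction anyway.
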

\begin{proof}
Consider without loss of generality that $x_1<x_2<\cdots<x_n.$ Then taking $i=1$ and $n$ in the first constraint gives $|z^Tx-n|\leq \beta$ and $|z^Tx-n+2|\leq\beta$. It's necessary to have $\beta>1$ and $z^Tx=n-1$ to have both constraints strictly satisfiable. Since we can always find $z^\star$ satisfying the below linear system
\[\begin{cases}
x^Tz^\star=n-1\\
1^Tz^\star=0\\
|x-1x_i|^Tz^\star-1^T\signfcn(x-1x_i)=-1 \quad\forall i=2,\cdots,n-1
\end{cases}\]
Note such $z^\star$ also satisfies
\[||-x+1x_i|^Tz^\star+1^T\signfcn(-x+1x_i)|\leq 1\]
Therefore when $\beta>1,$ the original constraint set is strictly feasible.
\end{proof}

\begin{lemma}\label{lemma_3}
The below constraint set is strictly feasible only when $\beta>2 .$
\[\begin{cases}
|z^T|x-1x_i|-1^T\signfcn(x-1x_i)|\leq\beta\\
|z^T|-x+1x_i|+1^T\signfcn(-x+1x_i)|\leq\beta\\
z^T1=0\\
z^Tx=n
\end{cases} \forall i=1,\cdots,n\] 
\begin{proof}
Consider without loss of generality that $x_1<x_2<\cdots<x_n.$ Then taking $i=n$ in the first constraint gives $|-n+(n-2)|\leq \beta,$ which indicates that $\beta>2$ is necessary for the constraint set to be strictly feasible. Since we can always find $z^\star$ satisfying 
\[\begin{cases}
x^Tz^\star=n\\
1^Tz^\star=0\\
|x-1x_i|^Tz^\star-1^T\signfcn(x-1x_i)=0\quad \forall i=2,\cdots,n-1
\end{cases}\]
Note such $z^\star$ also satisfies
\[||-x+1x_i|^Tz^\star+1^T\signfcn(-x+1x_i)|\leq 2\]
Therefore when $\beta>2$, the original constraint set is strictly feasible.
\end{proof}
\end{lemma}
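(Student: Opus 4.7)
The plan is to mirror the strategy of Lemmas \ref{lemma_2} and \ref{lemma_1}: extract necessity of $\beta>2$ by specializing the inequalities to a boundary index and eliminating the free variables with the two equality constraints, then settle sufficiency by constructing an explicit $z^\star$ that makes every inequality at most $2$ in absolute value. Without loss of generality I assume $x_1<x_2<\cdots<x_n$. Let $\alpha_i:=z^T|x-1x_i|$; the sign sums evaluate to $1^T\signfcn(x-1x_i)=n-2i+2$ and $1^T\signfcn(-x+1x_i)=2i-n$, so the constraint family rewrites as $|\alpha_i-(n-2i+2)|\le\beta$ and $|\alpha_i+(2i-n)|\le\beta$ for all $i$, subject to $1^Tz=0$ and $x^Tz=n$.

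For necessity I would specialize to $i=n$. The entries of $|x-1x_n|$ are $x_n-x_j$ for $j<n$ and zero at $j=n$, so $\alpha_n=x_n\sum_{j<n}z_j-\sum_{j<n}z_jx_j$. Using $\sum_{j<n}z_j=-z_n$ and $\sum_{j<n}z_jx_j=n-z_nx_n$ obtained from the two equality constraints, this collapses to $\alpha_n=-n$. The first inequality at $i=n$ then reads $|-n-(2-n)|=2\le\beta$, so strict feasibility forces $\beta>2$.

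For sufficiency I plan to select $z^\star$ as the solution to the $n$ linear equations
\[1^Tz=0,\qquad x^Tz=n,\qquad |x-1x_i|^Tz=n-2i+2\quad (i=2,\dots,n-1).\]
Under this choice the first inequality family vanishes for the interior indices, while the second family evaluates to $|(n-2i+2)+(2i-n)|=2$ there. An analogous computation gives $\alpha_1=n$ and $\alpha_n=-n$, so the inequalities at $i=1,n$ take values in $\{0,2\}$ as well. Thus every inequality is bounded by $2$, which lies strictly below $\beta$ whenever $\beta>2$, yielding strict feasibility.

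The main obstacle is verifying that the linear system above is nonsingular, equivalently that the rows $1^T,x^T,|x-1x_2|^T,\ldots,|x-1x_{n-1}|^T$ are linearly independent in $\mathbb{R}^n$. I would argue this by associating any vanishing linear combination with the continuous piecewise linear function $f(t)=c_1+c_2t+\sum_{i=2}^{n-1}c_i'|t-x_i|$. The hypothesis $f(x_j)=0$ for $j=1,\dots,n$ forces $f\equiv 0$ on $[x_1,x_n]$, since $f$ is linear on each segment $[x_j,x_{j+1}]$ and already vanishes at both endpoints. Identical vanishing on an interval forces every kink to disappear, so the slope jump $2c_i'$ at $x_i$ is zero for $i=2,\dots,n-1$; the residual condition $c_1+c_2t\equiv 0$ on $[x_1,x_n]$ then gives $c_1=c_2=0$, and the system is invertible. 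This supplies the required $z^\star$ and closes the argument.
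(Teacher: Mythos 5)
Your proof is correct and follows essentially the same route as the paper's: extract the necessity of $\beta>2$ from the $i=n$ constraint using the two equality constraints, then construct $z^\star$ from the linear system $1^Tz=0$, $x^Tz=n$, $|x-1x_i|^Tz=1^T\signfcn(x-1x_i)$ for the interior indices and check that every remaining inequality evaluates to $0$ or $2$. The one place you go beyond the paper is in actually verifying that this $n\times n$ system is nonsingular via the piecewise-linear interpolation argument, a step the paper asserts without proof; that verification is correct and welcome.
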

\subsection{Proof of Theorem \ref{thm3}}\label{thm31_proof}
\subsubsection{Proof of Theorem \ref{thm3} for ReLU activation}
\begin{proof}
Consider data $x\in\mathbb{R}^{n}$. Let $m$ denote number of hidden neurons, then we have first layer weight $w\in\mathbb{R}^m,$ first layer bias $b\in\mathbb{R}^m,$ second layer weight $\alpha\in\mathbb{R}^{m}$ and second layer bias $b_0\in\mathbb{R}.$  The score matching objective is reduced to
\begin{equation*}
p^\star=\min_{w,\alpha,b}\frac{1}{2}\left\|\sum_{j=1}^m (x w_j+1 b_j)_+\alpha_j+1 b_0\right\|_2^2+1^T\left(\sum_{j=1}^m w_j\alpha_j \indicfcn\{x w_j+1 b_j\geq 0\}\right)+\frac{1}{2}\beta \sum_{j=1}^m (w_j^2+\alpha_j^2).
\end{equation*}
According to Lemma 2 in \cite{pilanci2020neural}, after rescaling, the above problem is equivalent to 
\begin{equation*}
\min_{\substack{w,\alpha,b\\|w_j|=1}}\frac{1}{2}\left\|\sum_{j=1}^m (x w_j+1 b_j)_+\alpha_j+1 b_0\right\|_2^2+1^T\left(\sum_{j=1}^m w_j\alpha_j \indicfcn\{x w_j+1 b_j\geq 0\}\right)+\beta \sum_{j=1}^m |\alpha_j|,
\end{equation*}
which can be written as
\begin{equation*}
\begin{aligned}
& \min_{\substack{w,\alpha,b,r_1,r_2\\|w_j|=1}} \frac{1}{2}\|r_1\|_2^2+1^Tr_2+\beta\sum_{j=1}^m |\alpha_j|\\
&\quad \mbox{s.t. } r_1=\sum_{j=1}^m (x w_j+1 b_j)_+\alpha_j+1 b_0\\
&\qquad r_2 = \sum_{j=1}^m w_j\alpha_j\indicfcn\{x w_j+1 b_j\geq 0\}.
\end{aligned}
\end{equation*}
The dual problem writes
\begin{equation*}
\begin{aligned}
   &d^\star= \max_{z_1,z_2} \min_{\substack{w,\alpha,b,r_1,r_2\\|w_j|=1}}\frac{1}{2}\|r_1\|_2^2+1^Tr_2 +\beta\sum_{j=1}^m |\alpha_j|+z_1^T\left(r_1-\sum_{j=1}^m (x w_j+1 b_j)_+\alpha_j-1 b_0\right)\\
&\qquad \qquad\qquad \qquad\qquad \qquad\qquad\qquad \qquad\qquad  +z_2^T\left(r_2-\sum_{j=1}^m w_j\alpha_j\indicfcn\{x w_j+1 b_j\geq 0\}\right),
\end{aligned}
\end{equation*}
which gives a lower bound of $p^\star$. Minimizing  over $r_1,r_2,\alpha_j$ above gives
\begin{equation*}
\begin{aligned}
\max_{z} &\min_{b_0} -\frac{1}{2}\|z\|_2^2-b_0z^T1\\
& \mbox{s.t. } 
|z^T(x w_j+1 b_j)_+-w_j1^T \indicfcn\{x w_j+1 b_j\geq 0\}|\leq \beta,
\quad \forall |W_j|=1, \forall b_j.
\end{aligned}
\end{equation*}
For the constraints to hold, we must have $z^T1=0$ and $b_j$ takes values over $x_j$'s. The above is equivalent to
\begin{equation}\label{1d_dual_3}
\begin{aligned}
&\max_{z} \quad-\frac{1}{2}\|z\|_2^2\\
& \mbox{s.t.} 
\begin{cases}
|z^T(x-1 x_i)_+-1^T\indicfcn\{x-1 x_i\geq 0\}|\leq \beta \quad \\
|z^T(x-1 x_i)_+-1^T\indicfcn\{x-1 x_i>0\}|\leq \beta \quad \\
|z^T(-x+1 x_i)_++1^T\indicfcn\{-x+1x_i\geq 0\}|\leq \beta \quad \\
|z^T(-x+1 x_i)_++1^T\indicfcn\{-x+1 x_i>0\}|\leq \beta \quad \\
z^T1 = 0
\end{cases}   \forall i=1,\ldots,n,
\end{aligned}
\end{equation}
According to Lemma \ref{lemma_2}, when $\beta\geq 1,$ the constraints in (\ref{1d_dual_3}) are feasible for affine constraints, thus Slater's condition holds and the dual problem  writes
\begin{equation*}
\begin{aligned}
&d^\star=\min_{\substack{z_0,\ldots,z_7,z_8\\ \mbox{s.t.} z_0,\ldots,z_7\geq 0}} \max_{z} -\frac{1}{2}\|z\|_2^2  + \sum_{i=1}^n z_{0i}\left(z^T(x-1x_i)_+-1^T\indicfcn\{x-1x_i\geq 0\}+\beta\right)\\
&+ \sum_{i=1}^n z_{1i}\left(-z^T(x-1x_i)_++1^T\indicfcn\{x-1x_i\geq 0\}+\beta\right)+ \sum_{i=1}^n z_{2i}\left(z^T(x-1x_i)_+-1^T\indicfcn\{x-1x_i>0\}+\beta\right)\\
&+ \sum_{i=1}^n z_{3i}\left(-z^T(x-1x_i)_++1^T\indicfcn\{x-1x_i>0\}+\beta\right)+ \sum_{i=1}^n z_{4i}\left(z^T(-x+1x_i)_++1^T\indicfcn\{-x+1x_i\geq 0\}+\beta\right)\\
& + \sum_{i=1}^n z_{5i}\left(-z^T(-x+1x_i)_+-1^T\indicfcn\{-x+1x_i\geq 0\}+\beta\right)+ \sum_{i=1}^n z_{6i}\left(z^T(-x+1x_i)_++1^T\indicfcn\{-x+1x_i>0\}+\beta\right)\\
& + \sum_{i=1}^n z_{7i}\left(-z^T(-x+1x_i)_+-1^T\indicfcn\{-x+1x_i>0\}+\beta\right)
+z_8z^T1,
\end{aligned}
\end{equation*}
which is equivalent to
\begin{equation*}
\begin{aligned}
& \min_{\substack{z_0,\ldots,z_7 ,z_8\\ \mbox{s.t.} z_0,\ldots,z_7\geq 0}} \max_{z} -\frac{1}{2}\|z\|_2^2+e^Tz+f,
\end{aligned}
\end{equation*}
where 
\begin{equation*}
\begin{aligned}
e =\sum_{i=1}^n z_{0i}(x-1x_i)_+-\sum_{i=1}^n z_{1i}(x-1x_i)_++\sum_{i=1}^n z_{2i}(x-1x_i)_+-\sum_{i=1}^n z_{3i}(x-1x_i)_+ +\sum_{i=1}^n z_{4i}(-x+1x_i)_+\\-\sum_{i=1}^n z_{5i}(-x+1x_i)_++\sum_{i=1}^n z_{6i}(-x+1x_i)_+-\sum_{i=1}^n z_{7i}(-x+1x_i)_++1z_8,
\end{aligned}
\end{equation*}
and 
\begin{equation*}
\begin{aligned}
f=-\sum_{i=1}^n z_{0i}1^T\indicfcn\{x-1x_i\geq 0\}+\sum_{i=1}^n z_{1i}1^T\indicfcn\{x-1x_i\geq 0\}-\sum_{i=1}^n z_{2i}1^T\indicfcn\{x-1x_i>0\}\\+\sum_{i=1}^n z_{3i}1^T\indicfcn\{x-1x_i>0\}+\sum_{i=1}^n z_{4i}1^T\indicfcn\{-x+1 x_i\geq 0\}-\sum_{i=1}^n z_{5i}1^T\indicfcn\{-x+1x_i\geq 0\}\\+\sum_{i=1}^n z_{6i}1^T\indicfcn\{-x+1x_i> 0\}-\sum_{i=1}^n z_{7i}1^T\indicfcn\{-x+1x_i> 0\}+\beta(\sum_{i=0}^7\|z_i\|_1).
\end{aligned}
\end{equation*}
Maximizing over $z$ gives
\begin{equation*}
\begin{aligned}
\min_{\substack{z_0,\ldots,z_7,z_8\\ \mbox{s.t.} z_0,\ldots,z_7\geq 0}} \frac{1}{2}\|e\|_2^2 + f,
\end{aligned}
\end{equation*}
Simplifying to get
\begin{equation*}
\begin{aligned}
\min_{y_0,y_1, y_2,y_3,y_4} \frac{1}{2}\left\|A_1(y_0+y_1)+A_2(y_2+y_3)+1 y_4\right\|_2^2+1^TC_1y_0-1^TC_3y_2\\
+1^TC_2y_1-1^TC_4y_3+\beta(\|y_0\|_1+\|y_1\|_1+\|y_2\|_1+\|y_3\|_1).
\end{aligned}
\end{equation*}
Minimizing over $y_4$ gives the convex program (\ref{relu_noskip_formula}) in Theorem \ref{thm3} with $A=[\bar A_1,\bar A_1,\bar A_2,\bar A_2]\in \mathbb{R}^{n\times 4n}, b=[1^TC_1,1^TC_2,-1^TC_3,-1^TC_4]^T\in\mathbb{R}^{4n}$ where $\bar A_1=\left(I-\frac{1}{n}11^T\right)A_1, \bar A_2=\left(I-\frac{1}{n}11^T\right)A_2$ with $[A_1]_{ij}=(x_i-x_j)_+$ and $[A_2]_{ij}=(-x_i+x_j)_+, [C_1]_{ij}=\indicfcn\{x_i-x_j\geq 0\},[C_2]_{ij}=\indicfcn\{x_i-x_j> 0\},[C_3]_{ij}=\indicfcn\{-x_i+x_j\geq 0\},[C_4]_{ij}=\indicfcn\{-x_i+x_j> 0\}$. Once we obtain optimal solution $y^\star$ to problem (\ref{relu_noskip_formula}), we can take 
\begin{equation}\label{for_b0}
\begin{cases}
w_j^\star=\sqrt{|y^\star_j|}, \alpha_j^\star=\sqrt{|y^\star_j|},b^\star_j=-\sqrt{|y^\star_j|}x_j \text{ for }j=1,\ldots,n,\\
w_j^\star=\sqrt{|y^\star_j|}, \alpha_j^\star=\sqrt{|y^\star_j|},b^\star_j=-\sqrt{|y^\star_j|}(x_{j-n}+\epsilon) \text{ for }j=n+1,\ldots,2n,\\
w_j^\star=-\sqrt{|y^\star_j|}, \alpha_j^\star=\sqrt{|y^\star_j|},b^\star_j=\sqrt{|y^\star_j|}x_{j-2n}\text{ for }j=2n+1,\ldots,3n,\\
w_j^\star=-\sqrt{|y^\star_j|}, \alpha_j^\star=\sqrt{|y^\star_j|},b^\star_j=\sqrt{|y^\star_j|}(x_{j-3n}-\epsilon)\text{ for }j=3n+1,\ldots,4n,\\
b_0^\star=-\frac{1}{n}1^T([A_1,A_1,A_2,A_2]y^\star),
\end{cases}
\end{equation}
then score matching objective has the same value as optimal value of convex program (\ref{relu_noskip_formula}) as $\epsilon\rightarrow 0$, which indicates $p^\star=d^\star$  and the above parameter set is optimal.
\end{proof}
\subsubsection{Proof of Theorem \ref{thm3} for Absolute Value Activation}\label{abs_noskip_proof}
\begin{proof}
Consider data $x\in\mathbb{R}^{n}$. Let $m$ denote number of hidden neurons, then we have first layer weight $w\in\mathbb{R}^m,$ first layer bias $b\in\mathbb{R}^m,$ second layer weight $\alpha\in\mathbb{R}^{m}$ and second layer bias $b_0\in\mathbb{R}.$  Then the score matching objective is reduced to
\begin{equation*}
p^\star=\min_{w,\alpha,b}\frac{1}{2}\left\|\sum_{j=1}^m |x w_j+1 b_j|\alpha_j+1 b_0\right\|_2^2+1^T\left(\sum_{j=1}^m w_j\alpha_j \signfcn(x w_j+1 b_j)\right)+\frac{1}{2}\beta \sum_{j=1}^m (w_j^2+\alpha_j^2).
\end{equation*}
According to Lemma 2 in \cite{pilanci2020neural}, after rescaling, the above problem is equivalent to 
\[\min_{\substack{w,\alpha,b\\|w_j|=1}} \frac{1}{2}\left\|\sum_{j=1}^m |x w_j+1 b_j|\alpha_j+1 b_0\right\|_2^2+1^T\left(\sum_{j=1}^m w_j\alpha_j\signfcn(x w_j+1 b_j)\right)+\beta \sum_{j=1}^m |\alpha_j|,\]
which can be written as
\begin{equation}\label{1d_1}
\begin{aligned}
& \min_{\substack{w,\alpha,b,r_1,r_2\\|w_j|=1}} &\frac{1}{2}\|r_1\|_2^2+1^Tr_2+\beta\sum_{j=1}^m |\alpha_j|\\
&\qquad \mbox{s.t. } &r_1=\sum_{j=1}^m |x w_j+1 b_j|\alpha_j+1 b_0\\
&\qquad\qquad &r_2 = \sum_{j=1}^m w_j\alpha_j\signfcn(x w_j+1 b_j).
\end{aligned}
\end{equation}
The dual problem of (\ref{1d_1}) writes
\begin{equation*}
\begin{aligned}
   &d^\star= \max_{z_1,z_2} \min_{\substack{w,\alpha,b,r_1,r_2\\|w_j|=1}}\frac{1}{2}\|r_1\|_2^2+1^Tr_2 +\beta\sum_{j=1}^m |\alpha_j|+z_1^T\left(r_1-\sum_{j=1}^m |x w_j+1 b_j|\alpha_j-1 b_0\right)\\
&\qquad \qquad\qquad \qquad\qquad \qquad\qquad\qquad \qquad\qquad  +z_2^T\left(r_2-\sum_{j=1}^m w_j\alpha_j\signfcn(x w_j+1 b_j)\right),
\end{aligned}
\end{equation*}
which is a lower bound of optimal value to the original problem, i.e., $p^\star\geq d^\star$.  Minimizing over $r_1$ and $r_2$ gives 
\[\max_{z} \min_{\substack{w,\alpha,b\\|w_j|=1}} -\frac{1}{2}\|z\|_2^2 +\beta\sum_{j=1}^m |\alpha_j|-z^T\left(\sum_{j=1}^m|x w_j+1 b_j|\alpha_j+1 b_0\right)+1^T\sum_{j=1}^m w_j\alpha_j\signfcn(x w_j+1 b_j).\]
Minimizing over $\alpha_j$ gives
\begin{equation*}
\begin{aligned}
\max_{z} &\min_{b_0} -\frac{1}{2}\|z\|_2^2-b_0z^T1\\
& \mbox{s.t. } 
|z^T|x w_j+1 b_j|-w_j1^T \signfcn(x w_j+1 b_j)|\leq \beta,
\quad \forall |w_j|=1, \forall b_j,
\end{aligned}
\end{equation*}
which is equivalent to 
\begin{equation*}
\begin{aligned}
\max_{z} &\min_{b_0} -\frac{1}{2}\|z\|_2^2-b_0z^T1\\
&\mbox{s.t.} 
\begin{cases}
|z^T|x+1 b_j|-1^T\signfcn(x+1 b_j)|\leq \beta\\
|z^T|-x+1 b_j|+1^T\signfcn(-x+1 b_j)|\leq\beta
\end{cases} \forall b_j.
\end{aligned}
\end{equation*}
For the constraints to hold, we must have $z^T1=0$ and $b_j$ takes values over $x_j$'s. Furthermore, since $\signfcn$ is discontinuous at input 0, we add another function $\signfcn^\ast$ which takes value $-1$ at input 0 to cater for the constraints. The above is equivalent to
\begin{equation}\label{1d_dual}
\begin{aligned}
&\max_{z} \quad-\frac{1}{2}\|z\|_2^2\\
& \mbox{s.t.} 
\begin{cases}
|z^T|x-1 x_i|-1^T\signfcn(x-1 x_i)|\leq \beta \quad \\ 
|z^T|x-1 x_i|-1^T\signfcn^\ast(x-1 x_i)|\leq \beta \quad \\
|z^T|-x+1 x_i|+1^T\signfcn(-x+1 x_i)|\leq \beta \quad \\
|z^T|-x+1 x_i|+1^T\signfcn^\ast(-x+1 x_i)|\leq \beta \quad \\
z^T1 = 0
\end{cases}   \forall i=1,\ldots,n.
\end{aligned}
\end{equation}
Since the second constraint overlaps with the third, and the fourth constraint overlaps with the first, (\ref{1d_dual}) is equivalent to 
\begin{equation}\label{1d_dual2}
\begin{aligned}
&\max_{z} -\frac{1}{2}\|z\|_2^2\\
& \mbox{s.t.} 
\begin{cases}
|z^T|x-1 x_i|-1^T\signfcn(x-1 x_i)|\leq \beta \quad\\ 
|z^T|-x+1 x_i|+1^T\signfcn(-x+1 x_i)|\leq \beta \quad \\
z^T1 = 0
\end{cases}  \forall i=1,\ldots,n.
\end{aligned}
\end{equation}
According to Lemma \ref{lemma_1}, when $\beta\geq 1,$ the constraints in (\ref{1d_dual2}) are 
 feasible for affine constraints, thus Slater's condition holds and the dual problem  writes 
\begin{equation*}
\begin{aligned}
&d^\star=\min_{\substack{z_0,z_1,z_2,z_3,z_4\\ \mbox{s.t.} z_0,z_1,z_2,z_3\geq 0}} \max_{z} -\frac{1}{2}\|z\|_2^2  + \sum_{i=1}^n z_{0i}\left(z^T|x-1x_i|-1^T\signfcn(x-1 x_i)+\beta\right)\\
&\qquad \qquad\qquad\qquad\qquad + \sum_{i=1}^n z_{1i}\left(-z^T|x-1 x_i|+1^T\signfcn(x-1 x_i)+\beta\right)\\
&\qquad \qquad\qquad\qquad\qquad + \sum_{i=1}^n z_{2i}\left(z^T|-x+1 x_i|+1^T\signfcn(-x+1x_i)+\beta\right)\\
&\qquad \qquad\qquad\qquad\qquad + \sum_{i=1}^n z_{3i}\left(-z^T|-x+1x_i|-1^T\signfcn(-x+1x_i)+\beta\right)\\
& \qquad \qquad\qquad\qquad\qquad +z_4z^T1,
\end{aligned}
\end{equation*}
which is equivalent to
\begin{equation*}
\begin{aligned}
& \min_{\substack{z_0,z_1,z_2,z_3,z_4\\ \mbox{s.t.} z_0,z_1,z_2,z_3\geq 0}} \max_{z} -\frac{1}{2}\|z\|_2^2+e^Tz+f,
\end{aligned}
\end{equation*}
where 
\[e =\sum_{i=1}^n z_{0i}|x-1x_i|-\sum_{i=1}^n z_{1i}|x-1x_i|+\sum_{i=1}^n z_{2i}|-x+1x_i|-\sum_{i=1}^n z_{3i}|-x+1x_i|+1z_4\]
and
\begin{equation*}
\begin{aligned}
f=-\sum_{i=1}^n z_{0i}1^T\signfcn(x-1x_i)+\sum_{i=1}^n z_{1i}1^T\signfcn(x-1x_i)+\sum_{i=1}^n z_{2i}1^T\signfcn(-x+1x_i)\\-\sum_{i=1}^n z_{3i}1^T\signfcn(-x+1x_i)+\beta(\|z_0\|_1+\|z_1\|_1+\|z_2\|_1+\|z_3\|_1).
\end{aligned}
\end{equation*}
Maximizing over $z$ gives
\begin{equation*}
\begin{aligned}
\min_{\substack{z_0,z_1,z_2,z_3,z_4\\ \mbox{s.t.} z_0,z_1,z_2,z_3\geq 0}} \frac{1}{2}\|e\|_2^2 + f.
\end{aligned}
\end{equation*}
Simplifying to get
\begin{equation*}
\begin{aligned}
\min_{y_1,y_2, z} \frac{1}{2}\left\|A_1(y_1+y_2)+1 z\right\|_2^2+1^TC_1y_1-1^TC_2y_2+\beta(\|y_1\|_1+\|y_2\|_1).
\end{aligned}
\end{equation*}
Minimizing over $z$ gives the convex program (\ref{relu_noskip_formula}) in Theorem \ref{thm3} where $A=[ \bar A_1,\bar A_1]\in \mathbb{R}^{n\times 2n}, b=[1^TC_1,-1^TC_2]^T\in\mathbb{R}^{2n}$ with $\bar A_1=\left(I-\frac{1}{n}11^T\right)A_1$, $[A_1]_{ij}=|x_i- x_j|$, $[C_1]_{ij}=\signfcn(x_i-x_j)$ and $[C_2]_{ij}=\signfcn(-x_i+x_j)$. Once we obtain optimal solution $y^\star$ to problem (\ref{relu_noskip_formula}), we can take 
\[\begin{cases}
w_j^\star=\sqrt{|y^\star_j|}, \alpha_j^\star=\sqrt{|y^\star_j|},b^\star_j=-\sqrt{|y^\star_j|}x_j \text{ for }j=1,\ldots,n,\\
w_j^\star=-\sqrt{|y^\star_j|}, \alpha_j^\star=\sqrt{|y^\star_j|},b^\star_j=\sqrt{|y^\star_j|}x_{j-n}\text{ for }j=n+1,\ldots,2n,\\
b_0^\star=-\frac{1}{n}1^T([A_1,A_1]y^\star),
\end{cases}\]
then score matching objective  has the same value as optimal value of convex program (\ref{relu_noskip_formula}), which indicates $p^\star=d^\star$  and the above parameter set is optimal.
\end{proof}
\subsection{Convex Programs for  More Model Architectures}\label{sec3_more_archi}
\subsubsection{ReLU Activation with Skip Connection}\label{relu_skip_section}
\begin{theorem}\label{relu_skip_thm}
\textcolor{black}{When $\sigma$ is  ReLU } and $V\neq 0$ , denote the optimal score matching objective value (\ref{train_obj}) with $s_\theta$ specified in (\ref{general_arc})  as $p^\star,$ when $m\geq \text{len}(y)$ and $\beta\geq 1,$  
\begin{equation}\label{relu_skip_formula}
\begin{aligned}
p^*=& \min_{y}\quad \frac{1}{2}\|Ay\|_2^2+b^Ty+c+2\beta\|y\|_1\,,
\end{aligned}
\end{equation}
 $A,b,c$ and reconstruction rule for $\theta$ is specficied in the proof below.
\end{theorem}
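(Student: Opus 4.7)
The plan is to mirror the proof of Theorem \ref{thm3} for ReLU activation given in Appendix \ref{thm31_proof}, incorporating the additional contributions of the linear skip term $Vx$ and exploiting that $V$ is excluded from the weight decay (since $\theta'=\{W^{(1)},W^{(2)}\}$). I would begin by writing the empirical score matching loss for $s_\theta(x)=\sum_{j=1}^m \alpha_j (x w_j + 1 b_j)_+ + xV + 1 b_0$, which introduces two new pieces relative to the no-skip case: an extra $xV$ term inside the squared-norm $\|r_1\|_2^2$, and an additional $nV$ contribution to the trace piece $\sum_i \nabla_{x_i} s_\theta(x_i)$ (the Jacobian of the skip contributes $V$ at every sample).

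Next I would invoke Lemma 2 of \cite{pilanci2020neural} to rescale so that $|w_j|=1$ and the regularizer collapses to $\beta\sum_j |\alpha_j|$. Introducing auxiliary variables $r_1 = \sum_j \alpha_j (xw_j + 1 b_j)_+ + xV + 1 b_0$ and $r_2 = \sum_j w_j \alpha_j \indicfcn\{xw_j + 1 b_j \geq 0\}$, I would form the Lagrangian with dual variables $z_1, z_2$ and carry out the inner minimizations. Relative to Appendix \ref{thm31_proof}, the only structural change is that minimizing over the unregularized scalar $V$ imposes a new equality constraint $z^T x = n$ (arising from $\min_V V(n - z^T x)$); all other inner minimizations are unchanged, so that $r_1$ still yields $-\tfrac{1}{2}\|z\|_2^2$, $\alpha_j$ produces the four families of $|\cdot|\leq \beta$ constraints appearing in (\ref{1d_dual_3}), minimizing over $b_0$ yields $z^T 1 = 0$, and each $b_j$ is restricted to the data locations $-x_i$.

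For strong duality, Lemma \ref{lemma_2} already exhibits a strictly feasible $z^\star$ satisfying both $1^T z^\star = 0$ and $x^T z^\star = n$ whenever $\beta > 1$, so Slater's condition carries through for this enhanced constraint set at $\beta \geq 1$. Re-dualizing then proceeds as in Appendix \ref{thm31_proof}: each inequality contributes a non-negative multiplier that adds to the coefficient of $z$ and to the constant, while the new equality $z^T x = n$ contributes a free multiplier $\lambda$ that adds $\lambda x$ to the coefficient of $z$ and a constant $-\lambda n$ to the objective. Maximizing over $z$ yields a quadratic form in the multipliers and $\lambda$; eliminating $\lambda$ in closed form (it enters as a scalar that can be solved for) packages the result into $\tfrac{1}{2}\|Ay\|_2^2 + b^T y + c + 2\beta \|y\|_1$, where $c$ collects the constants from $-\lambda n$ and from the trace piece. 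Finally, the optimal parameters $\{W^{(1)\star}, b^{(1)\star}, W^{(2)\star}, b^{(2)\star}\}$ are reconstructed as in (\ref{for_b0}), while $V^\star$ is read off from the optimal Lagrange multiplier $\lambda^\star$ associated with $z^T x = n$, together with the induced adjustment to $b_0^\star$.

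The main obstacle will be the second dualization: correctly tracking the Lagrange multiplier $\lambda$ of the new equality constraint, eliminating it analytically, and packaging the result as $A, b, c$ in the stated form --- in particular, establishing the specific factor $2$ in front of $\beta\|y\|_1$ requires careful bookkeeping of how the skip-induced coupling rescales the $\ell_1$ penalty after merging the auxiliary dual variables. Equally important is verifying that the reconstruction (now including a non-zero $V^\star$) achieves the dual value, so that the weak-duality inequality $p^\star \geq d^\star$ is matched by an explicit primal feasible point, yielding $p^\star = d^\star$.
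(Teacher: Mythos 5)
Your strategy---directly dualizing the ReLU-with-skip objective---is not the route the paper takes, and it contains a genuine gap precisely at the point you flag as the ``main obstacle.'' The paper's proof never re-runs the duality argument for ReLU at all: it first reparameterizes the ReLU network as an absolute-value network using $(u)_+=\tfrac12(|u|+u)$, setting $\alpha^r=2\alpha^a$ and absorbing the linear part of the decomposition into the skip coefficient and the output bias (this is exactly why the reduction is only available when $V\neq 0$). Under this substitution the weight decay becomes $\tfrac{\beta}{2}\sum_j({w_j^a}^2+4{\alpha_j^a}^2)$, and after the rescaling of Lemma 2 of \cite{pilanci2020neural} the $\ell_1$ penalty comes out as $2\beta\sum_j|\alpha_j^a|$. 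The factor $2$ in $2\beta\|y\|_1$ is therefore an artifact of the change of activation and the halving of the second-layer weights---it has nothing to do with ``skip-induced coupling'' or with merging auxiliary dual variables in a second dualization. The resulting program is then obtained by invoking the absolute-value-with-skip result (Theorem \ref{abs_skip_thm}), which is why the $A$ and $b$ in (\ref{relu_skip_formula}) are built from $[A_1]_{ij}=|x_i-x_j|$ and live over a $2n$-dimensional $y$, with the condition $\beta\geq 1$ arising as $2\beta\geq 2$ from the threshold of Lemma \ref{lemma_3}.

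Your direct dualization, carried out faithfully, would instead reproduce the four families of constraints of (\ref{1d_dual_3}) plus the new equality $z^Tx=n$ (your derivation of that constraint, and your observation that the witness in Lemma \ref{lemma_2} already satisfies $x^Tz^\star=n$ so the feasibility threshold stays at $\beta>1$, are both correct). But re-dualizing that set yields a program over a $4n$-dimensional variable with penalty coefficient $\beta$, not $2\beta$, and with the ReLU-based matrices $(x_i-x_j)_+$ rather than $|x_i-x_j|$. That is a legitimate equivalent convex program for $p^\star$, but it is not the program asserted in (\ref{relu_skip_formula}), and no amount of eliminating the multiplier $\lambda$ of $z^Tx=n$ will manufacture the factor $2$. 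To prove the theorem as stated you either need the paper's reparameterization step, or an additional (nontrivial) argument identifying your $4n$-dimensional, coefficient-$\beta$ program with the $2n$-dimensional, coefficient-$2\beta$ one.
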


\begin{proof}
Here we reduce score matching objective including ReLU activation to score matching objective including absolute value activation and exploits results in Theorem \ref{abs_skip_thm}. Let $\{w^r,b^r,\alpha^r,v^r\}$ denotes parameter set corresponding to ReLU activation, consider another parameter set 
$\{w^a,b^a,\alpha^a,v^a\}$ satisfying 
\[\begin{cases}
\alpha^r=2\alpha^a,\\
w^r=w^a,\\
b^r=b^a,\\
b_0^r=b_0^a-\frac{1}{2}\sum_{j=1}^m b_j^r\alpha_j^r,\\
v^r=v^a-\frac{1}{2}\sum_{j=1}^m w_j^r\alpha_j^r.
\end{cases}\]
Then the score matching objective 
\begin{equation*}
\min_{w^r,\alpha^r,b^r,v^r}\frac{1}{2}\left\|\sum_{j=1}^m (x w^r_j+1 b^r_j)_+\alpha^r_j+x v^r+1 b_0^r\right\|_2^2+1^T\left(\sum_{j=1}^m w^r_j\alpha^r_j \indicfcn\{x w^r_j+1b^r_j\geq 0\}\right)+nv^r+\frac{\beta}{2} \sum_{j=1}^m ({w^r_j}^2+{\alpha^r_j}^2)
\end{equation*}
is equivalent to
\[
\min_{w^a,\alpha^a,b^a,v^a}\frac{1}{2}\left\|\sum_{j=1}^m |x w^a_j+1 b^a_j|\alpha^a_j+x v^a+1 b_0^a\right\|_2^2+1^T\left(\sum_{j=1}^m w^a_j\alpha^a_j\signfcn(x w^a_j+1 b^a_j)\right)+nv^a+\frac{\beta}{2}\sum_{j=1}^m \left({w^a_j}^2+4{\alpha^a_j}^2\right).
\]
According to Lemma 2 in \cite{pilanci2020neural}, after rescaling, the above problem is equivalent to 
\begin{equation}\label{relu_abs_eq}
\min_{\substack{w^a,\alpha^a,b^a,v^a\\|w^a_j|=1}}\frac{1}{2}\left\|\sum_{j=1}^m |x w^a_j+1 b^a_j|\alpha^a_j+x v^a+1 b_0^a\right\|_2^2+1^T\left(\sum_{j=1}^m w^a_j\alpha^a_j\signfcn(x w^a_j+1 b^a_j)\right)+nv^a+2\beta\sum_{j=1}^m |\alpha^a_j|.
\end{equation}
Following similar analysis as in  Appendix \ref{abs_skip_thm} with a different rescaling factor we can derive the convex program (\ref{relu_skip_formula}) with $A=B^{\frac{1}{2}}A_1,b=A_1^T(-n\bar x/\|\bar x\|_2^2)+b_1,c=-n^2/(2\|\bar x\|_2^2)$ where $B=I-P_{\bar x}$ with $P_{\bar x}=\bar x\bar x^T/\|\bar x\|_2^2$, and $A_1, b_1$ are identical to $A, b$ defined in Section \ref{abs_noskip_proof} respectively. Here, $\bar x_j:=x_j-\sum_i x_i/n$ denotes mean-subtracted data vector.  The optimal solution set to (\ref{relu_abs_eq}) is given by 
\[\begin{cases}
{w^a_j}^\star=\sqrt{2|y^\star_j|}, {\alpha^a_j}^\star=\sqrt{|y^\star_j|/2},{b^a_j}^\star=-\sqrt{2|y^\star_j|}x_j \text{ for }j=1,\ldots,n,\\
{W^a_j}^\star=-\sqrt{2|y^\star_j|}, {\alpha^a_j}^\star=\sqrt{|y^\star_j|/2},{b^a_j}^\star=\sqrt{2|y^\star_j|}x_{j-n}\text{ for }j=n+1,\ldots,2n,\\
{v^a}^\star=-(\bar x^TA_1y^\star+n)/\|\bar x\|_2^2,\\
{b_0^a}^\star=-\frac{1}{n}1^T([A_1',A_1']y^\star+x {v^a}^\star),
\end{cases}\]
where $A_1'$ is $A_1$ defined in Appendix \ref{abs_noskip_proof} and $y^\star$ is optimal solution to convex program (\ref{relu_skip_formula}). Then the optimal parameter set $\{w^r,b^r,\alpha^r,z^r\}$ is given by
\[\begin{cases}
{w^r_j}^\star=\sqrt{2|y^\star_j|}, {\alpha^r_j}^\star=\sqrt{2|y^\star_j|},{b^r_j}^\star=-\sqrt{2|y^\star_j|}x_j \text{ for }j=1,\ldots,n,\\
{w^r_j}^\star=-\sqrt{2|y^\star_j|}, {\alpha^r_j}^\star=\sqrt{2|y^\star_j|},{b^r_j}^\star=\sqrt{2|y^\star_j|}x_{j-n}\text{ for }j=n+1,\ldots,2n,\\
{v^r}^\star=-(\bar x^TA_1y^\star+n)/\|\bar x\|_2^2-\sum_{j=1}^m {w^r_j}^\star {\alpha^r_j}^\star/2,\\
{b_0^r}^\star=-\frac{1}{n}1^T([A_1',A_1']y^\star+x (-(\bar x^TA_1y^\star+n)/\|\bar x\|_2^2))-\sum_{j=1}^m {b^r_j}^\star{\alpha^r_j}^\star/2.
\end{cases}\]

\end{proof}
\subsubsection{Absolute Value Activation with Skip Connection}\label{abs_skip_sec}
\begin{theorem}\label{abs_skip_thm}
\textcolor{black}{When $\sigma$ is  absolute value activation } and $V\neq 0$ , denote the optimal score matching objective value (\ref{train_obj}) with $s_\theta$ specified in (\ref{general_arc})  as $p^\star,$ when $m\geq \text{len}(y)$ and $\beta\geq 2,$  
\begin{equation}\label{abs_skip_formula}
\begin{aligned}
p^*=& \min_{y}\quad \frac{1}{2}\|Ay\|_2^2+b^Ty+\beta\|y\|_1\,,
\end{aligned}
\end{equation}
 $A,b$ and reconstruction rule for $\theta$ is specficied in the proof below.
\end{theorem}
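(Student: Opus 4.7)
The plan is to follow the same duality route used for absolute value activation without skip connection (Appendix \ref{abs_noskip_proof}), while carefully tracking the extra terms produced by $V\in\mathbb{R}$ and $b^{(2)}\in\mathbb{R}$. First I would write out the empirical score matching objective: with $\sigma(t)=|t|$, skip coefficient $v^a$ and bias $b_0$, the trace of the Jacobian of $s_\theta$ at $x_i$ equals $\sum_j w_j\alpha_j\,\signfcn(x_iw_j+b_j)+v^a$, so the training loss in (\ref{train_obj}) becomes
\begin{equation*}
\tfrac{1}{2}\Bigl\|\sum_{j=1}^m |xw_j+1b_j|\alpha_j+xv^a+1b_0\Bigr\|_2^2+1^T\Bigl(\sum_{j=1}^m w_j\alpha_j\signfcn(xw_j+1b_j)\Bigr)+nv^a+\tfrac{\beta}{2}\sum_j(w_j^2+\alpha_j^2).
\end{equation*}
Note $v^a$ is not regularized but contributes the linear penalty $nv^a$ from the trace term, which is the only new ingredient beyond the no-skip case.

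Next I would apply the same rescaling (Lemma 2 of \citep{pilanci2020neural}) to fix $|w_j|=1$ and convert the squared-$\ell_2$ regularization on $(w_j,\alpha_j)$ into $\beta\sum_j|\alpha_j|$. I would then introduce auxiliary variables $r_1,r_2$ exactly as in (\ref{1d_1}) and form the Lagrangian with multipliers $z_1,z_2$. Minimizing out $r_1,r_2$ leaves $-\tfrac12\|z\|_2^2$ plus linear terms in the remaining primal variables. Minimization over $\alpha_j$ (for each choice of sign pattern $|w_j|=1$, $b_j\in\{-x_i\}$) produces the same two families of bound constraints $|z^T|x-1x_i|-1^T\signfcn(x-1x_i)|\le\beta$ and $|z^T|-x+1x_i|+1^T\signfcn(-x+1x_i)|\le\beta$ as in (\ref{1d_dual2}). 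The new step is that minimizing over $b_0$ gives $z^T1=0$ and, crucially, minimizing over the unregularized $v^a$ forces the coefficient of $v^a$ in the Lagrangian to vanish, yielding the additional linear constraint $z^Tx=n$. This is precisely the constraint set of Lemma \ref{lemma_3}, which is strictly feasible exactly when $\beta>2$; this explains the threshold $\beta\ge 2$ in the statement and justifies invoking Slater's condition to flip to the bi-dual without a duality gap.

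Finally I would pass to the bi-dual. Introducing nonnegative multipliers for the four one-sided constraints and eliminating the maximization over $z$ analytically (as in Appendix \ref{abs_noskip_proof}) collapses to a quadratic over $y=(y_1,y_2)\in\mathbb{R}^{2n}_+$ plus $\ell_1$ penalties, with coefficient matrix $A_1$ built from pairwise distances $[A_1]_{ij}=|x_i-x_j|$ and linear coefficient $b_1$ built from the sign matrices $C_1,C_2$. The constraints $z^T1=0$ and $z^Tx=n$ translate, via their Lagrange multipliers, into a projection onto the orthogonal complement of $\mathrm{span}\{1,x\}$ and a rank-one shift of the linear term: after a routine completion-of-squares in the two scalar multipliers, the objective takes the form $\tfrac12\|Ay\|_2^2+b^Ty+\beta\|y\|_1$ (up to an additive constant absorbed into $b$) with $A=B^{1/2}[A_1,A_1]$ for $B=I-\tfrac1n 11^T-\bar x\bar x^T/\|\bar x\|_2^2$ the projector onto $\{1,\bar x\}^\perp$, where $\bar x=x-\tfrac1n 1 1^Tx$, and $b$ incorporating $n/\|\bar x\|_2^2$ times $A_1^T\bar x$ added to the sign-matrix column sums. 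To conclude I would read off the optimal parameters: the hidden-layer parameters $\{w_j^\star,\alpha_j^\star,b_j^\star\}$ follow the same square-root reconstruction from $y^\star$ as in the no-skip proof, while $v^{a\star}$ is recovered from the Lagrange multiplier dual to $z^Tx=n$ and $b_0^\star$ is chosen to zero out the mean of the residual, verifying $p^\star=d^\star$.

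The main obstacle I anticipate is bookkeeping: correctly projecting out both the $1$ and $\bar x$ directions and tracking the resulting linear and constant terms so that the final coefficients $A$ and $b$ have the clean form asserted. Everything else parallels the no-skip argument; the only genuinely new structural ingredient is the $z^Tx=n$ constraint and the sharper Slater threshold $\beta>2$ it forces via Lemma \ref{lemma_3}.
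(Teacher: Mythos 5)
Your proposal follows essentially the same route as the paper's proof in Appendix \ref{abs_skip_sec}: rescale via Lemma 2 of \cite{pilanci2020neural}, dualize with auxiliary variables $r_1,r_2$, observe that minimizing over the unregularized $v$ adds the constraint $z^Tx=n$, invoke Lemma \ref{lemma_3} for Slater's condition at $\beta>2$, pass to the bi-dual, and eliminate the two scalar multipliers to obtain the projection onto $\{1,\bar x\}^\perp$ and the shifted linear term, with the same reconstruction of $v^\star$ and $b_0^\star$. The plan is correct and matches the paper's argument in all essential steps, including the identification of $A$ and $b$ (your single projector onto $\{1,\bar x\}^\perp$ coincides with the paper's composition of the mean-subtraction and the $\bar x$-projection, since these commute).
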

\begin{proof}
Consider data matrix $x\in\mathbb{R}^{n}$,  then the score matching objective is reduced to
\begin{equation*}
p^\star=\min_{w,\alpha,b,v}\frac{1}{2}\left\|\sum_{j=1}^m |x w_j+1 b_j|\alpha_j+x v+1 b_0\right\|_2^2+1^T\left(\sum_{j=1}^m w_j\alpha_j \signfcn(x w_j+1 b_j)\right)+nv+\frac{1}{2}\beta \sum_{j=1}^m (w_j^2+\alpha_j^2).
\end{equation*}
Following similar analysis as in Appendix \ref{abs_noskip_proof}, we can derive the dual problem as
\begin{equation*}
\begin{aligned}
   &d^\star= \max_{z_1,z_2} \min_{\substack{w,\alpha,b,v,r_1,r_2\\|w_j|=1}}\frac{1}{2}\|r_1\|_2^2+1^Tr_2 +nv+\beta\sum_{j=1}^m |\alpha_j|+z_1^T\left(r_1-\sum_{j=1}^m |x w_j+1 b_j|\alpha_j-x v-1b_0\right)\\
&\qquad \qquad \qquad\qquad \qquad\qquad \qquad\qquad\qquad \qquad\qquad\qquad  +z_2^T\left(r_2-\sum_{j=1}^m w_j\alpha_j\signfcn(x w_j+1 b_j)\right).
\end{aligned}
\end{equation*}
which gives a lower bound of $p^\star$. Minimizing over $r_1$ and $r_2$ gives 
\[\max_{z_1} \min_{\substack{w,\alpha,b,v\\|w_j|=1}} -\frac{1}{2}\|z_1\|_2^2 +nv+\beta\sum_{j=1}^m |\alpha_j|-z_1^T\left(\sum_{j=1}^m|x w_j+1 b_j|\alpha_j+x v+1 b_0\right)+1^T\sum_{j=1}^m w_j\alpha_j\signfcn(x w_j+1 b_j).\]
Minimizing over $v$ gives
\[\max_{\substack{z_1\\z_1^Tx=n}} \min_{\substack{w,\alpha,b\\|w_j|=1}} -\frac{1}{2}\|z_1\|_2^2 +\beta\sum_{j=1}^m |\alpha_j|-z_1^T\left(\sum_{j=1}^m|x w_j+1 b_j|\alpha_j+1b_0\right)+1^T\sum_{j=1}^m w_j\alpha_j\signfcn(x w_j+1 b_j).\]
Minimizing over $\alpha_j$ gives
\begin{equation*}
\begin{aligned}
\max_{z} &\min_{b_0} -\frac{1}{2}\|z\|_2^2-b_0z^T1\\
& \mbox{s.t. } 
\begin{cases}
z^Tx=n\\
|z^T|x w_j+1 b_j|-w_j1^T \signfcn(x w_j+1 b_j)|\leq \beta,
\quad \forall |w_j|=1, \forall b_j.
\end{cases}
\end{aligned}
\end{equation*}
Following same logic as in Appendix \ref{abs_noskip_proof}, the above problem is equivalent to
\begin{equation}\label{1d_dual_2}
\begin{aligned}
&\max_{z} \quad-\frac{1}{2}\|z\|_2^2\\
& \mbox{s.t.} 
\begin{cases}
|z^T|x-1 x_i|-1^T\signfcn(x-1 x_i)|\leq \beta \quad \\ 
|z^T|-x+1 x_i|+1^T\signfcn(-x+1 x_i)|\leq \beta \quad \\
z^T1 = 0\\
z^Tx=n
\end{cases}   \forall i=1,\ldots,n.
\end{aligned}
\end{equation}
According to Lemma \ref{lemma_3}, when $\beta\geq 2,$ the constraints in (\ref{1d_dual_2}) are   feasible for affine constraints, thus Slater's condition holds and the dual problem  writes
\begin{equation*}
\begin{aligned}
&d^\star=\min_{\substack{z_0,z_1,z_2,z_3,z_4,z_5\\ \mbox{s.t.} z_0,z_1,z_2,z_3\geq 0}} \max_{z} -\frac{1}{2}\|z\|_2^2  + \sum_{i=1}^n z_{0i}\left(z^T|x-1x_i|-1^T\signfcn(x-1x_i)+\beta\right)\\
&\qquad \qquad\qquad\qquad\qquad + \sum_{i=1}^n z_{1i}\left(-z^T|x-1x_i|+1^T\signfcn(x-1x_i)+\beta\right)\\
&\qquad \qquad\qquad\qquad\qquad + \sum_{i=1}^n z_{2i}\left(z^T|-x+1x_i|+1^T\signfcn(-x+1x_i)+\beta\right)\\
&\qquad \qquad\qquad\qquad\qquad + \sum_{i=1}^n z_{3i}\left(-z^T|-x+1x_i|-1^T\signfcn(-x+1x_i)+\beta\right)\\
& \qquad \qquad\qquad\qquad\qquad +z_4(z^Tx-n)+z_5z^T1,
\end{aligned}
\end{equation*}
which is equivalent to
\begin{equation*}
\begin{aligned}
& \min_{\substack{z_0,z_1,z_2,z_3,z_4, z_5\\ \mbox{s.t.} z_0,z_1,z_2,z_3\geq 0}} \max_{z} -\frac{1}{2}\|z\|_2^2+e^Tz+f,
\end{aligned}
\end{equation*}
where
\[e =\sum_{i=1}^n z_{0i}|x-1x_i|-\sum_{i=1}^n z_{1i}|x-1x_i|+\sum_{i=1}^n z_{2i}|-x+1x_i|-\sum_{i=1}^n z_{3i}|-x+1x_i|+xz_4+1z_5,\]
and
\begin{equation*}
\begin{aligned}
f=-\sum_{i=1}^n z_{0i}1^T\signfcn(x-1x_i)+\sum_{i=1}^n z_{1i}1^T\signfcn(x-1x_i)+\sum_{i=1}^n z_{2i}1^T\signfcn(-x+1x_i)\\-\sum_{i=1}^n z_{3i}1^T\signfcn(-x+1x_i)-z_4n+\beta(\|z_0\|_1+\|z_1\|_1+\|z_2\|_1+\|z_3\|_1).
\end{aligned}
\end{equation*}
Maximizing over $z$ gives
\begin{equation*}
\begin{aligned}
\min_{\substack{z_0,z_1,z_2,z_3,z_4,z_5\\ \mbox{s.t.} z_0,z_1,z_2,z_3\geq 0}} \frac{1}{2}\|e\|_2^2 + f.
\end{aligned}
\end{equation*}
Simplifying to get
\begin{equation}\label{final_obj_2}
\begin{aligned}
\min_{y_0,y_1, y_2,y_3} \frac{1}{2}\left\|A_1'(y_0+y_1)+x y_2+1 y_3\right\|_2^2+1^TC_1y_0-1^TC_2y_1+ny_2+\beta(\|y_1\|_1+\|y_2\|_1),
\end{aligned}
\end{equation}
where $A_1',C_1,C_2$ are as $A_1,C_1,C_2$ defined in Appendix \ref{abs_noskip_proof}. Minimizing over $y_3$ gives $y_3=-1^T(A_1'(y_0+y_1)+x y_2)/n$ and (\ref{final_obj_2}) is reduced to
\[\min_{y_0,y_1, y_2} \frac{1}{2}\left\|\bar A_1'(y_0+y_1)+\bar xy_2\right\|_2^2+1^TC_1y_0-1^TC_2y_1+ny_2+\beta(\|y_1\|_1+\|y_2\|_1),\]
 where $\bar A_1'$ is as $\bar A_1$ defined in Appendix \ref{abs_noskip_proof}. Minimizing over $y_2$ gives $y_2=-\left(\bar x^T\bar A_1'(y_0+y_1)+n\right)/\|\bar x\|_2^2$ and the above problem is equivalent to the convex program (\ref{abs_skip_formula}) in Theorem \ref{abs_skip_thm} with $A=B^{\frac{1}{2}}A_1,b=A_1^T(-n\bar x/\|\bar x\|_2^2)+b_1,c=-n^2/(2\|\bar x\|_2^2)$ where $B=I-P_{\bar x}$ with $P_{\bar x}=\bar x\bar x^T/\|\bar x\|_2^2$, and $A_1, b_1$ are identical to $A, b$ defined in Section \ref{abs_noskip_proof} respectively. Once we obtain optimal solution $y^\star$ to problem (\ref{abs_skip_formula}), we can take 
 \[\begin{cases}
w_j^\star=\sqrt{|y^\star_j|}, \alpha_j^\star=\sqrt{|y^\star_j|},b^\star_j=-\sqrt{|y^\star_j|}x_j \text{ for }j=1,\ldots,n,\\
w_j^\star=-\sqrt{|y^\star_j|}, \alpha_j^\star=\sqrt{|y^\star_j|},b^\star_j=\sqrt{|y^\star_j|}x_{j-n}\text{ for }j=n+1,\ldots,2n,\\
v^\star=-(\bar x^TA_1y^\star+n)/\|\bar x\|_2^2,\\
b_0^\star=-\frac{1}{n}1^T([A_1',A_1']y^\star+x v^\star),
\end{cases}\]
then score matching objective  has the same value as optimal value of convex program (\ref{abs_skip_formula}), which indicates $p^\star=d^\star$ and the above parameter 
 set is optimal.

\end{proof}
\subsection{Proof of Theorem \ref{sm_nd_thm}}\label{thm32_proof}
Here we first depict the assumption required for Theorem \ref{sm_nd_thm} to hold in Assumption \ref{multi_assum}. Note if Assumption \ref{multi_assum} is not true, original Theorem \ref{sm_nd_thm} still holds with equal sign replaced by greater than or equal to, which can be trivially seen from our proof of Theorem \ref{sm_nd_thm} below. Assumption \ref{multi_assum} has already been characterized in Proposition 3.1 in \cite{mishkin2022fast}, here we restate it for sake of completeness. First we define for each activation pattern $D_i\in\mathcal{D},$ the set of vectors that induce $D_i$ as
\[\mathcal{K}_i=\{u\in\mathbb{R}^d:(2D_i-I)Xu\succeq 0\}.\]

\begin{assumption}\label{multi_assum}
Let $\mathcal{D}_X$ denote the activation pattern set $\mathcal{D}$ induced by dataset $X$, assume for any $D_i\in\mathcal{D}_X,$  $\mathcal{K}_i-\mathcal{K}_i=\mathbb{R}^d.$
\end{assumption}
According to Proposition 3.1 in \cite{mishkin2022fast}, Assumption \ref{multi_assum} is satisfied whenever data matrix $X$ is full row-rank. Empirically,  Assumption \ref{multi_assum} holds with high probability according to experiments in \cite{mishkin2022fast}.
\subsubsection{Formal Proof}\label{sm_nd_proof}
\begin{proof}
When $X\in\mathbb{R}^{n\times d}$ for some $d>1$. Let $m$ denote the number of hidden neurons, then the score matching objective can be reduced to
\begin{equation}\label{high-dim-train0}
p^\star=\min_{u_j,v_j} \sum_{i=1}^n \left(\frac{1}{2}\left\|\sum_{j=1}^m(X_iu_j)_+v_j^T\right\|_2^2+\text{tr}\left(\nabla_{X_i}\left[\sum_{j=1}^m (X_iu_j)_+v_j^T\right]\right)\right),
\end{equation}
which can be rewritten as 
\begin{equation}\label{high-dm-prob0}
\min_{u_j,v_j} \frac{1}{2}\left\|\sum_{j=1}^m (Xu_j)_+v_j^T\right\|_F^2+1^T\left(\sum_{j=1}^m \indicfcn\{Xu_j\geq 0\}v_j^Tu_j\right).
\end{equation}
Let $D'_j=\text{diag}\left(\mathbbm{1}\{Xu_j\geq 0\}\right)$, then problem (\ref{high-dm-prob0}) is equivalent to
\begin{equation}\label{high-dim-prob2}
\min_{u_j,v_j} \frac{1}{2}\left\|\sum_{j=1}^m D'_jXu_jv_j^T\right\|_F^2+\sum_{j=1}^m \text{tr}(D'_j)v_j^Tu_j.
\end{equation}
Thus 
\begin{align}
p^\star & =\min_{\substack{W_j=u_jv_j^T\\(2D'_j-I)Xu_j\geq 0}}\frac{1}{2}\left\|\sum_{j=1}^m D'_jXW_j\right\|_F^2+\sum_{j=1}^m \text{tr}(D'_j)\text{tr}(W_j) \label{const}\\
& \geq \min_{W_j} \frac{1}{2} \left\|\sum_{j=1}^P D_jXW_j\right\|_F^2+\sum_{j=1}^P \text{tr}(D_j)\text{tr}(W_j),  \label{high-dim-cvx}
\end{align}
where $D_1,\ldots,D_P$ enumerates all possible sign patterns of $\text{diag}\left(\mathbbm{1}\{Xu\geq 0\}\right).$ To prove the reverse direction, let $\{W_j^\star\}$ be the optimal solution to the convex program (\ref{high_dim_cvx0}), we provide a way to reconstruct optimal $\{u_j,v_j\}$ which achieves the lower bound value. We first factorize each  $W_j^\star=\sum_{k=1}^d \tilde u_{jk}\tilde v_{jk}^T$. According to Theorem 3.3 in \citep{mishkin2022fast}, for any $\{j,k\}$, under Assumption \ref{multi_assum}, we can write $\tilde u_{jk}=\tilde u_{jk}'-\tilde u_{jk}''$ such that $\tilde u_{jk}',\tilde u_{jk}''\in \mathcal{K}_j$ with $\mathcal{K}_j=\{u\in\mathbb{R}^d:(2D_j-I)Xu\succeq 0\}$. Therefore, when $m\geq 2Pd,$ we can set $\{u_j,v_j\}$ to enumerate through $\{\tilde u_{jk}',\tilde v_{jk}\}$ and $\{\tilde u_{jk}'',-\tilde v_{jk}\}$ to achieve optimal value of (\ref{high_dim_cvx0}). With absolute value activation, the conclusion holds by replacing $D_j$ with $\text{diag}\left(\signfcn(Xu_j)\right)$ and $D_1,\ldots,D_P$ enumerate all possible sign patterns of $\text{diag}\left(\signfcn(Xu)\right).$
\end{proof}

\section{Proof in Section \ref{convergence}}\label{convergence_proof}
\subsection{Proof of Score Prediction}\label{score_pred_proof}
\subsubsection{Score Prediction for ReLU without Skip Connection}\label{relu_noskip_score}
\begin{proof}
The optimality condition for convex program (\ref{relu_noskip_formula}) is 
\begin{equation}\label{opt_cond}
0\in A^TAy+b+\beta\theta_1,
\end{equation}
where $\theta_1\in\partial \|y\|_1$. To show $y^\star$ satisfies optimality condition (\ref{opt_cond}), let $a_i$ denote the $i$th column of $A$. Check the first entry, 
\begin{equation*}
\begin{aligned}
& \quad a_1^TAy+b_1+\beta(-1)
= nvy^\star_{1}-nvy^\star_{3n}+n-\beta=0.
\end{aligned}
\end{equation*}
Check the $3n$th entry,
\begin{equation*}
\begin{aligned}
& \quad a_{3n}^TAy+b_{3n}+\beta
= -nvy^\star_{1}+nvy^\star_{3n}-n+\beta=0.
\end{aligned}
\end{equation*}
For $j$th entry with $j\not\in\{1,3n\}$, note 
\begin{equation*}
\begin{aligned}
&\quad |a_j^TAy+b_j|\\
& =|a_j^T(a_1 y^\star_{1}+a_{3n} y^\star_{3n})+b_j|\\
&=|a_j^T(a_1 y^\star_{1}-a_{1} y^\star_{3n})+b_j|\\
&=\left|\frac{\beta-n}{nv}a_j^Ta_1+b_j\right|.
\end{aligned}
\end{equation*}
Since $|b_j|\leq n-1$, by continuity, $|a_j^TAy+b_j|\leq\beta$ should hold as we decrease $\beta$ a little further to threshold $\beta_1=\max_{j\not\in\{1,3n\}} |a_j^TAy+b_j|$. Therefore, $y^\star$ is optimal.

\end{proof}
\subsubsection{Score Prediction for Absolute Value Activation without Skip Connection}\label{abs_noskip_score}
\begin{proof}
Assume without loss of generality data points are ordered as $x_1<\ldots<x_n$, then
\[b=[n,n-2,\cdots,-(n-2),n-2,n-4,\cdots,-n]\,.\]
The optimality condition  to the convex program (\ref{relu_noskip_formula}) is given by
\begin{equation}\label{sub_cond}
0\in A^T Ay+b+\beta\theta_1\,,
\end{equation}
where $\theta_1 \in \partial \|y\|_1$. To show $y^\star$ satisfies optimality condition (\ref{sub_cond}), let $a_i$ denote the $i$th column of $A$. We check the first entry
\begin{equation*}
\begin{aligned}
& \quad a_1^TAy+b_1+\beta(-1)= nvy_1-nvy_n+n-\beta=0.
\end{aligned}
\end{equation*}
We then check the last entry
\begin{equation*}
\begin{aligned}
& \quad a_n^TAy+b_n+\beta
= -nvy_1+nvy_n-n+\beta=0.
\end{aligned}
\end{equation*}
For $j$th entry with $1<j<n$, note 
\begin{equation*}
\begin{aligned}
&\quad |a_j^TAy+b_j|\\
& =|a_j^T(a_1 y_1+a_n y_n)+b_j|\\
&=|a_j^T(a_1 y_1-a_1 y_n)+b_j|\\
&=\left|\frac{\beta-n}{nv}a_j^Ta_1+b_j\right|.
\end{aligned}
\end{equation*}
Since $|b_j|\leq n-2$, by continuity, $|a_j^TAy+b_j|\leq\beta$ should hold as we decrease $\beta$ a little further to some threshold $\beta_2=\max_{j\not\in\{1,n\}} |a_j^TAy+b_j|$. Therefore, $y^\star$ satisfies (\ref{sub_cond}).
\end{proof}
\subsubsection{Score Prediction for ReLU with Skip Connection}\label{relu_skip_score}
In the convex program (\ref{relu_skip_formula}), $y=0$ is an  optimal solution when $2\beta\geq \|b\|_\infty$. Therefore, following the reconstruction procedure described in Appendix \ref{relu_skip_section}, the corresponding neural network parameter set is given by $\{W^{(1)}=0,b^{(1)}=0,W^{(2)}=0,b^{(2)}=\mu/v,V=-1/v\}$ with $\mu$ and $v$ denotes the sample mean and sample variance as described in Section \ref{convergence}. For any test data $\hat x$, the corresponding predicted score is given by
\[\hat y=V\hat x +b^{(2)}=-\frac{1}{v}(\hat x-\mu),\]
which gives the score function of Gaussian distribution with mean being sample mean and variance being sample variance.Therefore, adding skip connection would change the zero score prediction to a linear function parameterized by sample mean and variance in the large weight decay regime. 
\subsubsection{Score Prediction for Absolute Value Activation with Skip Connection}\label{abs_skip_score}
Consider convex program (\ref{abs_skip_formula}), when $\beta > \| b\|_\infty,$ $y=0$ is optimal. Following the reconstruction procedure described in Appendix \ref{abs_skip_sec}, the corresponding neural network parameter set is given by $\{W^{(1)}=0,b^{(1)}=0,W^{(2)}=0,b^{(2)}=\mu/v, V=-1/v\}$ with $\mu$ and $v$ denotes the sample mean and sample variance as described in Section \ref{convergence}. For any testing data $\hat x$, the corresponding predicted score is given by 
\[\hat y=V\hat x +b^{(2)}=-\frac{1}{v}(\hat x-\mu),\]
which is the score function of Gaussian distribution with mean being sample mean and variance being sample variance, just as the case for $\sigma$ being ReLU activation and $V\neq 0$ described in Appendix \ref{relu_skip_section}.
\subsection{Proof of Convergence Result}\label{conv_proof}
\begin{proof}
When $\beta_1<\beta\leq n$, the predicted score function is differentiable almost everywhere with least slope $0$ and largest slope $(n-\beta)/(nv)$. Then since the integrated score function is weakly  concave, Theorem \ref{conv_thm2} follows case 1 in Theorem 4.3.6 in \citep{sinho2023logconcave}.
\end{proof}
\section{Proof in Section \ref{dsm_section}}
\subsection{Proof of Theorem \ref{thm4}}\label{thm41_proof}
\subsubsection{Proof of Theorem \ref{thm4} for ReLU activation}\label{thm41_relu_proof}
Consider data matrix $x\in\mathbb{R}^{n}$. Let $m$ denote number of hidden neurons, then we have first layer weight $w\in\mathbb{R}^m,$ first layer bias $b\in\mathbb{R}^m,$ second layer weight $\alpha\in\mathbb{R}^{m}$ and second layer bias $b_0\in\mathbb{R}.$ Let $l$ denotes the label vector, i.e, $l=[\delta_1/\epsilon,\delta_2/\epsilon,\ldots,\delta_n/\epsilon]^T$. The score matching objective is reduced to
\begin{equation*}
p^\star=\min_{w,\alpha,b}\frac{1}{2}\left\|\sum_{j=1}^m (x w_j+1 b_j)_+\alpha_j+1 b_0-l\right\|_2^2+\frac{1}{2}\beta \sum_{j=1}^m (w_j^2+\alpha_j^2).
\end{equation*}
According to Lemma 2 in \cite{pilanci2020neural}, after rescaling, the above problem is equivalent to 
\begin{equation*}
\min_{\substack{w,\alpha,b\\|w_j|=1}} \frac{1}{2}\left\|\sum_{j=1}^m (x w_j+1 b_j)_+\alpha_j+1 b_0-l\right\|_2^2+\beta \sum_{j=1}^m |\alpha_j|,
\end{equation*}
which can be rewritten as 
\begin{equation}\label{dsm_1d_22}
\begin{aligned}
& \min_{\substack{w,\alpha,b,r\\|w_j|=1}} \frac{1}{2}\|r\|_2^2+\beta\sum_{j=1}^m |\alpha_j|\\
&\qquad \mbox{s.t. } r = \sum_{j=1}^m (x w_j+1 b_j)_+\alpha_j+1 b_0-l.
\end{aligned}
\end{equation}
The dual of problem (\ref{dsm_1d_22}) writes
\begin{equation*}
\begin{aligned}
d^\star=&\max_{z} -\frac{1}{2}\|z\|_2^2+z^Tl\\
& \mbox{s.t.} 
\begin{cases}
|z^T(x-1 x_i)_+|\leq \beta \quad\\ 
|z^T(-x+1 x_i)_+|\leq \beta \quad \\
z^T1 = 0
\end{cases}  \forall i=1,\ldots,n.
\end{aligned}
\end{equation*}
Note the constraint set is strictly feasible since $z=0$ always satisfies the constraints, Slater's condition holds and we get the dual problem as
\[d^\star=\min_{\substack{z_0,z_1,z_2,z_3,z_4\\ \mbox{s.t.} z_0,z_1,z_2,z_3\geq 0}} 
\frac{1}{2}\|e\|_2^2+f,\]
where $e=\sum_{i=1}^n z_{0i}(x-1 x_i)_+-\sum_{i=1}^n z_{1i}(x-1 x_i)_++\sum_{i=1}^n z_{2i}(-x+1x_i)_+-\sum_{i=1}^n z_{3i}(-x+1x_i)_++1z_4+l$ and $f=\beta(\|z_0\|_1+\|z_1\|_1+\|z_2\|_1+\|z_3\|_1)$. Simplify to get
\[\min_y \frac{1}{2}\|Ay+\bar l\|_2^2+\beta\|y\|_1,\]
with $A=[\bar A_1,\bar A_2]\in \mathbb{R}^{n\times 2n}$ where $\bar A_1=\left(I-\frac{1}{n}11^T\right)A_1, \bar A_2=\left(I-\frac{1}{n}11^T \right)A_2$ with $[A_1]_{ij}=(x_i-x_j)_+$ and $[A_2]_{ij}=(-x_i+x_j)_+$. $\bar l_j=l_j-\sum_i l_i/n$ is the mean-subtracted label vector. Once we obtain optimal solution $y^\star$ to problem (\ref{thm4formula}), we can take 
\[\begin{cases}
w_j^\star=\sqrt{|y^\star_j|}, \alpha_j^\star=-\sqrt{|y^\star_j|},b^\star_j=-\sqrt{|y^\star_j|}x_j \text{ for }j=1,\ldots,n,\\
w_j^\star=-\sqrt{|y^\star_j|}, \alpha_j^\star=-\sqrt{|y^\star_j|},b^\star_j=\sqrt{|y^\star_j|}x_{j-n} \text{ for }j=n+1,\ldots,2n,\\
b_0^\star=\frac{1}{n}1^T([A_1,A_2]y^\star+l),
\end{cases}\]
then denoising score matching objective has the same value as optimal value of convex program (\ref{thm4formula}), which indicates $p^\star=d^\star$ and the  above parameter set is optimal.
\subsubsection{Proof of Theorem \ref{thm4} for Absolute Value Activation}

\begin{proof}
Consider data matrix $x\in\mathbb{R}^{n}$. Let $m$ denote number of hidden neurons, then we have first layer weight $w\in\mathbb{R}^m,$ first layer bias $b\in\mathbb{R}^m,$ second layer weight $\alpha\in\mathbb{R}^{m}$ and second layer bias $b_0\in\mathbb{R}.$ Let $l$ denotes the label vector, i.e, $l=[\delta_1/\epsilon,\delta_2/\epsilon,\ldots,\delta_n/\epsilon]^T$. The score matching objective is reduced to
\begin{equation*}
p^\star=\min_{w,\alpha,b}\frac{1}{2}\left\|\sum_{j=1}^m (x w_j+1 b_j)_+\alpha_j+1 b_0-l\right\|_2^2+\frac{1}{2}\beta \sum_{j=1}^m (w_j^2+\alpha_j^2).
\end{equation*}
According to Lemma 2 in \cite{pilanci2020neural}, after rescaling, the above problem is equivalent to 
\begin{equation*}
\min_{\substack{w,\alpha,b\\|w_j|=1}} \frac{1}{2}\left\|\sum_{j=1}^m (x w_j+1 b_j)_+\alpha_j+1 b_0-l\right\|_2^2+\beta \sum_{j=1}^m |\alpha_j|,
\end{equation*}
which can be rewritten as 
\begin{equation}\label{dsm_1d_22_1}
\begin{aligned}
& \min_{\substack{w,\alpha,b,r\\|w_j|=1}} \frac{1}{2}\|r\|_2^2+\beta\sum_{j=1}^m |\alpha_j|\\
&\qquad \mbox{s.t. } r = \sum_{j=1}^m (x w_j+1 b_j)_+\alpha_j+1 b_0-l.
\end{aligned}
\end{equation}
The dual of problem (\ref{dsm_1d_22_1}) writes
\begin{equation*}
\begin{aligned}
d^\star=&\max_{z} -\frac{1}{2}\|z\|_2^2+z^Tl\\
& \mbox{s.t.} 
\begin{cases}
|z^T(x-1 x_i)_+|\leq \beta \quad\\ 
|z^T(-x+1 x_i)_+|\leq \beta \quad \\
z^T1 = 0
\end{cases}  \forall i=1,\ldots,n.
\end{aligned}
\end{equation*}
Note the constraint set is strictly feasible since $z=0$ always satisfies the constraints, Slater's condition holds and we get the dual problem as
\[d^\star=\min_{\substack{z_0,z_1,z_2,z_3,z_4\\ \mbox{s.t.} z_0,z_1,z_2,z_3\geq 0}} 
\frac{1}{2}\|e\|_2^2+f,\]
where $e=\sum_{i=1}^n z_{0i}(x-1 x_i)_+-\sum_{i=1}^n z_{1i}(x-1 x_i)_++\sum_{i=1}^n z_{2i}(-x+1x_i)_+-\sum_{i=1}^n z_{3i}(-x+1x_i)_++1z_4+l$ and $f=\beta(\|z_0\|_1+\|z_1\|_1+\|z_2\|_1+\|z_3\|_1)$. Simplify to get
\[\min_y \frac{1}{2}\|Ay+\bar l\|_2^2+\beta\|y\|_1.\]
where $A=\left(I-\frac{1}{n}11^T\right)A_3\in \mathbb{R}^{n\times n}$ with $[A_3]_{ij}=|x_i- x_j|,\bar l$ is the same as defined in Appendix \ref{thm41_relu_proof}. Once we obtain optimal solution $y^\star$ to problem (\ref{thm4formula}), we can take 
\[\begin{cases}
w_j^\star=\sqrt{|y^\star_j|}, \alpha_j^\star=-\sqrt{|y^\star_j|},b^\star_j=-\sqrt{|y^\star_j|}x_j \text{ for }j=1,\ldots,n,\\
w_j^\star=-\sqrt{|y^\star_j|}, \alpha_j^\star=-\sqrt{|y^\star_j|},b^\star_j=\sqrt{|y^\star_j|}x_{j-n} \text{ for }j=n+1,\ldots,2n,\\
b_0^\star=\frac{1}{n}1^T([A_1,A_2]y^\star+l),
\end{cases}\]
then denoising score matching objective has the same value as optimal value of convex program (\ref{thm4formula}), which indicates $p^\star=d^\star$ and the  above parameter set is optimal. 
\end{proof}
\subsection{Proof of Theorem \ref{thm4_2}}\label{thm42_proof}
\begin{proof}
When $X\in\mathbb{R}^{n\times d}$ for some $d>1$, when $\beta=0,$ the score matching objective can be reduced to
\begin{equation*}
p^\star=\min_{u_j,v_j} \sum_{i=1}^n \frac{1}{2}\left\|\sum_{j=1}^m(X_iu_j)_+v_j^T-L_i\right\|_2^2,
\end{equation*}
which can be rewritten as 
\begin{equation}\label{high-dm-prob}
\min_{u_j,v_j} \frac{1}{2}\left\|\sum_{j=1}^m (Xu_j)_+v_j^T-Y\right\|_F^2.
\end{equation}
Let $D'_j=\text{diag}\left(\mathbbm{1}\{Xu_j\geq 0\}\right)$, then problem (\ref{high-dm-prob}) is equivalent to
\begin{equation*}
\min_{u_j,v_j} \frac{1}{2}\left\|\sum_{j=1}^m D'_jXu_jv_j^T-Y\right\|_F^2.
\end{equation*}
Therefore, 
\begin{align*}
p^\star & =\min_{\substack{W_j=u_jv_j^T\\(2D'_j-I)Xu_j\geq 0}}\frac{1}{2}\left\|\sum_{j=1}^m D'_jXW_j-Y\right\|_F^2\nonumber\\
& \geq \min_{W_j} \frac{1}{2} \left\|\sum_{j=1}^P D_jXW_j-Y\right\|_F^2,
\end{align*}
where $D_1,\ldots,D_P$ enumerate all possible sign patterns of $\text{diag}\left(\mathbbm{1}\{Xu\geq 0\}\right).$ 
 Under Assumption \ref{multi_assum},  the construction of optimal parameter set follows  Appendix \ref{sm_nd_proof}. With absolute value activation, the same conclusion holds by replacing $D'_j$ to be $\text{diag}\left(\signfcn(Xu_j)\right)$ and $D_1,\ldots,D_P$ enumerate all possible sign patterns of $\text{diag}\left(\signfcn(Xu)\right).$
\end{proof}
\section{Details for Numerical Experiments in Section \ref{simu_sec}}\label{simu_supp_detail}
\subsection{Score Matching Fitting}\label{sm_simu_detail}
For Gaussian data experiment, the training dataset contains  $n=500$ data points sampled from standard Gaussian. For non-convex neural network training, we run 10 trials with different random parameter initiations and solve with Adam optimizer with step size $1e-2$. We train for 500 epochs. We run Langevin dynamics sampling (Algorithm \ref{alg_2}) with convex score predictor with $10^5$ data points and $T=500$ iterations, we take $\mu_0$ to be uniform distribution from $-10$ to $10$ and $\epsilon=1.$ 

For Gaussian mixture experiment, the training dataset contains two Gaussian component each containing $500$ data points, with centers at $-10$ and $10$ and both have standard variance. We take $\beta=20$.
\subsection{Denoising Score Matching Fitting}\label{dsm_simu_detail}
For spiral data simulation, we first generate $100$ data points forming a spiral as shown in the left most plot in Figure \ref{dscore_simu_}. We then add five levels of Gaussian noise with mean zero and standard deviation $[0.5,0.1,0.05,0.03,0.01]$. Thus the training data set contains $500$ noisy data points. We fit five $2d$ convex score predictors corresponding to each noise level. We solve the convex program with CVXPY \cite{diamond2016cvxpy} with MOSEK solver \cite{mosek}. The score plot corresponding to fitting our convex program with noise level $0.03.$ For annealed Langevin sampling, we sample $500$ data points in total, starting from uniform distribution on $[-10,10]$ interval. We set $\epsilon=1$ in each single Langevin process in Algorithm \ref{alg_2}. We present the sample scatter plots sequentially after sample with $0.5$ noise level score predictor for $5$ steps (Level 1), $0.1$ noise level score predictor for $5$ steps (Level 2), $0.05$ noise level score predictor for $5$ steps (Level 3), $0.03$ noise level score predictor for $5$ steps (Level 4), and finally $0.01$ noise level score predictor for $15$ steps (Level 5).
\subsubsection{Convex Reformulations of Denoising Score Matching for the Spiral Data Generation}\label{2d_exact}
\newcommand{\W}{{W}}
\newcommand{\loss}{\ell}
\newcommand{\ones}{{1}}
\newcommand{\X}{{X}}
\newcommand{\reals}{\mathbb{R}}
\newcommand{\cross}{\times}
\newcommand{\x}{x}
\newcommand{\Z}{Z}
\newcommand{\rr}{d}
\newcommand{\K}{K}
\newcommand{\rank}{\mathbf{rank}}
Since convex program (\ref{high_dim_cvx_2}) requires to iterate over all activation pattern $D_i$'s, here for easier implementation, we instead follow a variant of (\ref{high_dim_cvx_2}) which has been derived in Theorem 14 of \cite{pilanci2024complexity}, we replicate here for completeness. The formal theorem and our implementation follow Theorem \ref{thm:vectorout} below. We first state the definition of Maximum Chamber Diameter, which is used in statement of Theorem \ref{thm:vectorout} for theoretic soundness.
\begin{definition}
 We define the Maximum Chamber Diameter, denoted as \[\diam(\X):=\max_{\substack{w,v\in\mathbb{R}^d,\|w\|_2=\|v\|_2=1\\ \text{sign}(Xw)=\text{sign}(Xv)}} \|w-v\|_2.\]
\end{definition}

Consider the denoising score matching objective for a two-layer ReLU model given by
\begin{align}
    \label{eq:two_layer_relu_vector}
    p_v^*&\triangleq\min_{\W^{(1)},\W^{(2)},b} \left\|\sum_{j=1}^m\sigma(\X \W^{(1)}_j)\W^{(2)}_{j}-L\right\|_F^2 + \lambda \sum_{j=1}^m \|\W^{(1)}_{j}\|_p^2 + \|\W^{(2)}_{j}\|_p^2.
    \end{align}
    Here, the label matrix $L\in\reals^{n\times d}$ contains the $d$-dimensional noise labels as in equation (\ref{high_dim_cvx_2}), and $\W^{(1)}\in\reals^{d\times m}$, $\W^{(2)}\in\reals^{m\times d}$.
We introduce the following equivalent convex program.
\begin{align}
    \label{eq:convex_d_dim_two_layer_relu_l1_vector}
        \hat p_v \triangleq \min_{Z \in \reals^{c\times d}} \|\K\Z-L\|_F^2 + \lambda \sum_{j=1} \|Z_{ j}\|_2,
    \end{align}
where $Z_j$ is the $j$-th column of the matrix $\Z$ and $c=\binom{n}{d-1}$.
\begin{theorem}
    \label{thm:vectorout}
    Define the matrix $K$ as follows
    \begin{align*}
        \K_{ij} =
                 \frac{\big( x_i \wedge x_{j_1}\wedge\cdots\wedge x_{j_{\rr-1}}\big)_+}{\|\x_{j_1} \wedge\,...\, \wedge x_{j_{\rr-1}} \|_p}\,,
    \end{align*}
    where the multi-index $j=(j_1,...,j_{\rr-1})$ is over all combinations of $d-1$ rows. It holds that
    \begin{itemize}
    \item when $p=1$, the convex problem \eqref{eq:convex_d_dim_two_layer_relu_l1_vector} is equivalent to the non-convex problem \eqref{eq:two_layer_relu_vector}, i.e., $p_v^*=\hat p_v$.
    \item when $p=2$, the convex problem \eqref{eq:convex_d_dim_two_layer_relu_l1_vector} is a $\frac{1}{1-\epsilon}$ approximation of the non-convex problem \eqref{eq:two_layer_relu_vector}, i.e., $p_v^*\le \hat p_v \le \frac{1}{1-\epsilon} p_v^*$, where $\epsilon\in(0,1)$ is an upper-bound on the maximum chamber diameter $\diam(\X)$. 
    \end{itemize}
    An neural network achieving the above approximation bound can be constructed as follows:
    \begin{align}
        \label{eq:optimal_d_dim_vector}
        f(x) = \sum_{j}Z^*_{ j}\frac{\big( x_i \wedge x_{j_1}\wedge\cdots\wedge x_{j_{\rr-1}}\big)_+}{\|\x_{j_1} \wedge\,...\, \wedge x_{j_{\rr-1}} \|_p}\,,
    \end{align}
    where $Z^*$ is an optimal solution to \eqref{eq:convex_d_dim_two_layer_relu_l1_vector}.
\end{theorem}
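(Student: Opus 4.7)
The plan is to follow the convex-duality machinery for two-layer ReLU networks developed in \cite{pilanci2020neural, sahiner2021vectoroutput}, specialized to the denoising score matching loss with vector outputs, and then identify the extremal first-layer directions with the wedge-product vectors appearing in the matrix $K$. First I would apply the standard scaling argument (analogous to Lemma 2 of \cite{pilanci2020neural}) to the non-convex objective (\ref{eq:two_layer_relu_vector}): by the homogeneity of ReLU, each neuron can be rescaled so that $\|W^{(1)}_j\|_p = 1$, which converts the term $\lambda(\|W^{(1)}_j\|_p^2 + \|W^{(2)}_j\|_p^2)$ into $2\lambda \|W^{(2)}_j\|_p$. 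This yields an equivalent constrained problem whose outer minimization over $W^{(2)}$ is convex in the ReLU-activated features, and whose remaining non-convexity is concentrated in the choice of $W^{(1)}_j$.

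Next I would take the Lagrangian dual of this rescaled problem over a matrix dual variable $Z \in \mathbb{R}^{n\times d}$ for the residual. Standard calculations show the dual constraint is of the form $\max_{\|u\|_p = 1}\|X^T D(u)\, Z\|_{2} \le \lambda$, where $D(u) = \mathrm{diag}(\mathbbm{1}\{Xu \ge 0\})$, with the maximum taken over all unit-norm first-layer directions. The crux is then to reduce this continuous max-constraint to a finite one indexed by the vertices of the hyperplane arrangement $\{u : x_i^\top u = 0\}_{i=1}^n$. By the hyperplane-arrangement geometry, the extremal rays of each full-dimensional chamber are exactly the normalized wedge directions $u_j = (x_{j_1}\wedge\cdots\wedge x_{j_{d-1}})/\|x_{j_1}\wedge\cdots\wedge x_{j_{d-1}}\|_p$, indexed by $(d-1)$-subsets of the rows of $X$; the associated feature vector $(Xu_j)_+$ gives precisely the $j$-th column of $K$. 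Taking the bidual then recovers (\ref{eq:convex_d_dim_two_layer_relu_l1_vector}), and the optimal $W^{(2)}$ can be read off directly from $Z^*$, yielding the neural network (\ref{eq:optimal_d_dim_vector}).

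For the exactness claim $p_v^* = \hat p_v$ when $p=1$, I would argue that the constraint $\|X^T D(u) Z\|_2 \le \lambda$ is a linear functional of $u$ (given the activation pattern $D(u)$ is fixed), and $p=1$ means the feasible set $\{u : \|u\|_1 \le 1\}$ is a polytope whose extreme points lying inside a chamber are exactly the wedge directions normalized in $\ell_1$; hence the supremum over each chamber is achieved at a vertex of $K$. For the $p=2$ approximation, the feasible set is the Euclidean ball, whose extreme points on each chamber are not finite, so the wedge directions are only approximate maximizers. The key quantitative lemma is that, within any chamber of the arrangement, the worst-case gap $\sup_{u}\|X^T D(u)Z\|_2 - \max_j \|X^T D(u_j)Z\|_2$ is controlled by the chamber diameter $\diam(X)$; since $\diam(X) \le \epsilon < 1$, one obtains an overall $(1-\epsilon)$-factor slack in the dual constraint, which translates to the $1/(1-\epsilon)$ multiplicative factor in the primal value.

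The main obstacle will be the $p=2$ approximation step: establishing the $1/(1-\epsilon)$ bound via the chamber diameter requires carefully comparing the supremum of the dual functional over an entire spherical cap (a chamber of $\{u : \|u\|_2 = 1\}$) to its value at the distinguished wedge direction. I would likely proceed by a perturbation argument, writing any unit $u$ in a chamber as $u_j + (u - u_j)$ with $\|u - u_j\|_2 \le \diam(X)$, bounding the increment using $\|X\|$-type factors that cancel after rescaling, and then absorbing the error into a rescaling $\lambda \mapsto \lambda/(1-\epsilon)$. The exactness argument for $p=1$ and the bidual/reconstruction step are essentially bookkeeping given the established vector-output convex reformulation in \cite{sahiner2021vectoroutput}, so the novelty and difficulty rest primarily in the geometric control of the $p=2$ chamber relaxation.
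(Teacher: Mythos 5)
The paper does not actually prove this theorem: it is imported verbatim as Theorem 14 of \cite{pilanci2024complexity}, and the ``proof'' in the appendix is a single citation. So there is no in-paper argument to compare yours against; the relevant comparison is with the cited work, whose strategy your sketch does track at a high level --- rescale by homogeneity to put an $\ell_p$ group penalty on the second layer, dualize, reduce the continuous dual constraint over first-layer directions to the finite set of extreme rays of the hyperplane-arrangement chambers, and identify those rays with the generalized cross products $x_{j_1}\wedge\cdots\wedge x_{j_{d-1}}$. That identification of the columns of $K$ with the one-dimensional faces of the arrangement is the right key idea, and your reading of why $p=1$ gives exactness (polytopal feasible set, maximum of a convex function at vertices) versus why $p=2$ only gives an approximation (the spherical cap has no finite extreme-point set) is correct in spirit.

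Two points in your sketch would need repair before it constitutes a proof. First, the dual constraint is misstated: for a vector-output network the constraint is of the form $\max_{\|u\|_p\le 1}\|Z^{T}(Xu)_+\|_2\le\lambda$, i.e.\ the Euclidean norm of a $d$-vector $Z^{T}D(u)Xu$; your expression $\|X^{T}D(u)Z\|_2$ drops the $u$ and is a norm of a $d\times d$ matrix, which is not the quantity whose maximizers you then characterize. Second, and more substantively, the $p=2$ perturbation argument as written --- bounding $\|Z^{T}DX u\|$ by $\|Z^{T}DXu_j\|+\|Z^{T}DX\|\,\|u-u_j\|$ with $\|u-u_j\|\le\diam(X)$ --- yields only an \emph{additive} error depending on $\|Z^{T}DX\|$, which does not cancel and does not produce the multiplicative $\tfrac{1}{1-\epsilon}$ factor. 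The correct route is a gauge/conic argument: every unit vector in a chamber is a nonnegative combination of that chamber's normalized extreme rays, and the chamber-diameter bound forces the pairwise inner products of those rays to be at least $1-\epsilon^2/2$, so the combination coefficients sum to at most $\tfrac{1}{1-\epsilon}$; the dual constraint at the rays then implies the constraint everywhere up to rescaling $\lambda\mapsto\lambda/(1-\epsilon)$, which is what transfers to the primal value. Your additive bound would have to be replaced by this conic-hull containment for the second bullet of the theorem to follow.
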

See Theorem 14 in \cite{pilanci2024complexity} for proof. In our simulation in Section \ref{dsm_simu_sec}, we implement convex program (\ref{eq:convex_d_dim_two_layer_relu_l1_vector}), solve with CVXPY \cite{diamond2016cvxpy}, and get score prediction from equation (\ref{eq:optimal_d_dim_vector}).

\section{Additional Simulation Results}\label{simu_supp}
In this section, we give more simulation results besides those discussed in main text in Section \ref{simu_sec}. In Section \ref{score_simu_supp} we show simulation results for score matching tasks with more neural network types and data distributions. In section \ref{dscore_simu_supp} we show more simulation results for denoising score matching tasks.
\subsection{Supplemental Simulation for Score Matching Fitting}\label{score_simu_supp}
\begin{figure*}[ht!]
 \makebox[\textwidth][c]{\includegraphics[width=1.0\linewidth]{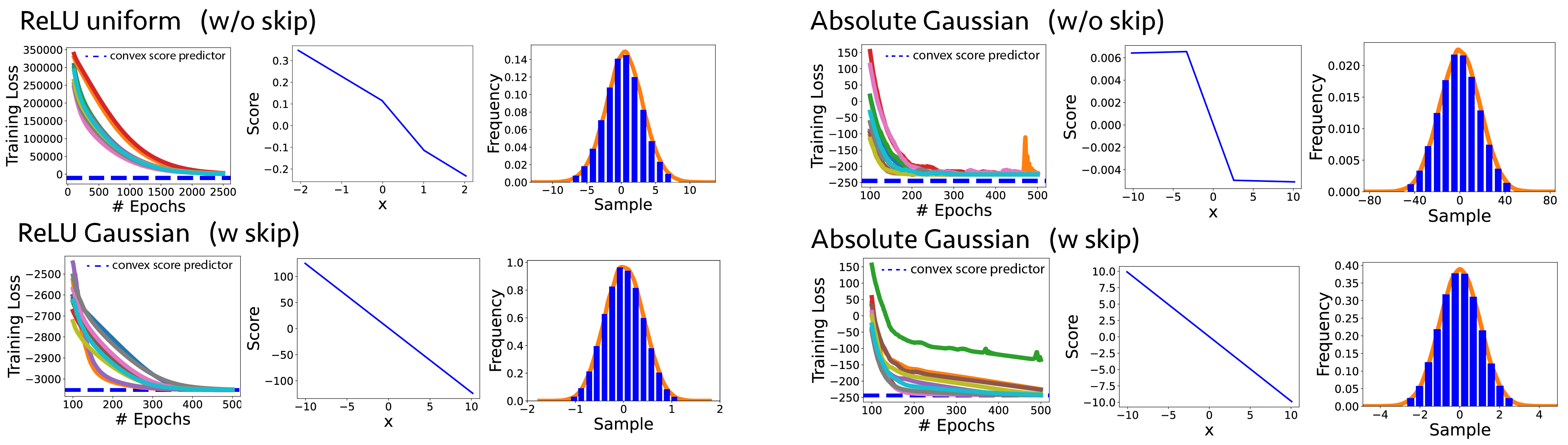}}
\caption{Simulation results for 
 score matching tasks with two-layer neural network. The left subplots for all four categories show training loss where the dashed blue lines indicate loss of convex score predictor. The middle plots show score prediction by convex score predictor. The right plots show sampling histograms via plain Langevin process with convex score predictor.  See Appendix \ref{score_simu_supp} for details.}\label{sm_more_simu}
\end{figure*}
Here we verify our findings in Section \ref{convergence} with more experiments. The upper left plot in Figure \ref{sm_more_simu} shows results for two-layer ReLU network without skip connection and with training data of uniform distribution on range $[0,1]$. Here we still set $\beta=\|b\|_\infty-1$ as in Section \ref{score_simu_sec}. Our theoretic analysis in Section \ref{convergence} reveals that for this $\beta$ value, the predicted score corresponding to Gaussian distribution characterized with sample mean and sample variance, which is corroborated by our simulation results here, i.e., the mid-subplot shows score function contained in left plot in Figure \ref{relu_abs}. The upper right plot follows same experimental setup except that here we experiment with absolute value activation instead of ReLU and the training data is standard normal. The predicted score is aligned with our theoretic derivation in right plot in Figure \ref{relu_abs}.

The bottom left and bottom right plots are for networks with skip connection. We set $\beta=\|b\|_\infty+1$. Our theory in Section \ref{convergence} concludes that for this $\beta$ value, NNs without skip connection would predict zero scores while NNs with skip connection predict linear score corresponding to Gaussian distribution chractrized with sample mean and sample variance, which is supported by our simulation results.

For non-convex training, we run 10 trials with different random parameter initiations. It can be observed that our convex program always solves the training problem globally. Note for absolute value activation NNs (top right and bottom right plots), non-convex training sometimes sticks with local optimality, reflected by the gap of convergence value between non-convex training and our convex fitting.
\begin{figure*}[ht!]
 \makebox[\textwidth][c]{\includegraphics[width=1.0\linewidth]{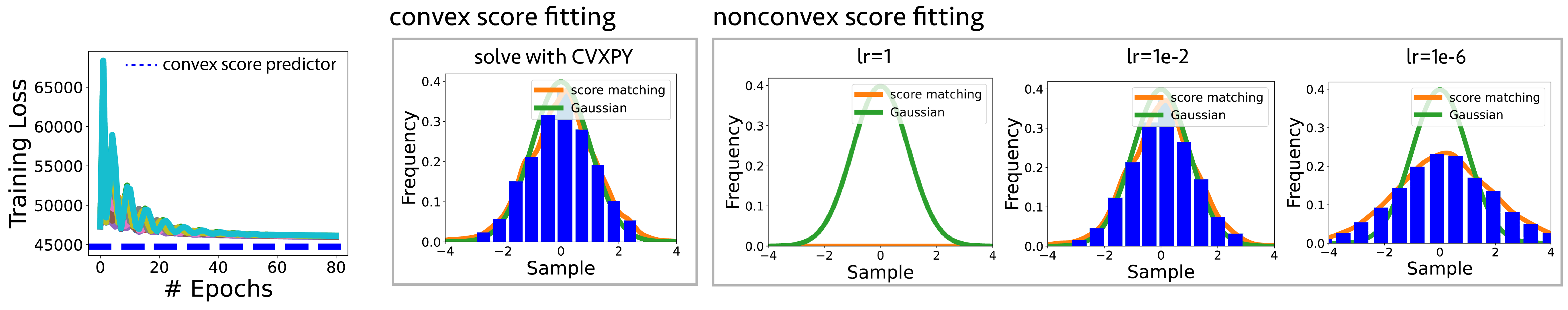}}
\caption{Simulation results for denoising score matching tasks with two-layer ReLU neural network. The left plot shows training loss where the dashed blue line indicates loss of convex score predictor (\ref{thm4formula}). The second plot shows sampling histogram via annealed Langevin process with convex score predictor. The third, fourth, and fifth plots show sampling histograms via annealed Langevin process with  non-convex score predictors trained with learning rates $1,1e-2,1e-6$ respectively. The ground truth distribution is standard Gaussian, which is recovered by our model.}\label{dsm_simu_supp}
\end{figure*}
\subsection{Supplemental Simulation for Denoising Score Matching Fitting}\label{dscore_simu_supp}
For experiment in Figure \ref{dsm_simu_supp}, training data is standard Gaussian and $\beta=0.5$ is adopted. we take ten noise levels with standard deviation $[\sigma_1,\ldots,\sigma_L]$ being the uniform grid from $1$ to $0.01$. For each noise level, we sample $10$ steps. Initial sample points follow uniform distribution in range $[-1,1]$. The non-convex training  uses Adam optimizer and takes $200$ epochs. Left most plot in Figure \ref{dsm_simu_supp} shows the training loss of $10$ non-convex fittings with stepsize $lr=1e-2$ and our convex fitting. It can be observed that our convex fitting achieves lower training loss than all non-convex fittings. The second plot in Figure \ref{dsm_simu_supp} shows annealed Langevin sampling histogram using our convex score predictor, which captures the underline Gaussian distribution. The right three plots show annealed Langevin sampling histograms with non-convex fitted score predictor trained with different learning rates. With $lr=1,$ training loss diverges and thus the predict score diverges from the true score of training data. Thus the sample histogram diverges from Gaussian. With $lr=1e-2,$ non-convex fitted NN recognizes the desired distribution while with $lr=1e-6,$ the NN is not trained enough thus the sampling results resemble Gaussian to some extent but not accurately. These results show that non-convex fitted score predictor is sometimes unstable due to training hyperparameter setting while convex fitted score predictor is usually much more reliable and thus gains empirical advantage.
\end{appendix}

\end{document}